\newtheorem{theorem}{Theorem}
\newtheorem{corollary}{Corollary}
\newtheorem{lemma}{Lemma}
\newtheorem{proposition}{Proposition}
\theoremstyle{definition}
\newcommand{\R}{\mathbb{R}}
\newcommand{\N}{\mathbb{N}}
\newcommand{\mF}{\mathcal{F}}
\newcommand{\mD}{\mathcal{D}}
\newcommand{\mI}{\mathcal{I}}
\newcommand{\mH}{\mathcal{H}}
\newcommand{\mS}{\mathbb{S}}
\newcommand{\mN}{\mathcal{N}}
\newcommand{\mR}{\mathcal{R}}
\newcommand{\Ep}{\mathbb{E}}
\renewcommand{\Pr}{\mathrm{Pr}}
\newcommand{\mE}{\mathcal{E}}
\renewcommand{\hat}{\widehat}
\renewcommand{\tilde}{\widetilde}
\renewcommand{\check}{\widecheck}
\newcommand{\mone}{\textit{\textbf{1}}}
\newcommand{\argmin}{\operatornamewithlimits{argmin}}
\newcommand{\argmax}{\operatornamewithlimits{argmax}}
\newcommand{\inte}{\operatornamewithlimits{Int}}
\newcommand{\vect}{\operatornamewithlimits{vec}}
\title[DNNs for Non-Smooth Functions]{Deep Neural Networks Learn\\Non-Smooth Functions Effectively}
\date{\today}
\author{Masaaki Imaizumi \and Kenji Fukumizu}
\address{The Institute of Statistical Mathematics}
\email{imaizumi@ism.ac.jp}
\begin{document}

\maketitle

\begin{abstract}
We theoretically discuss why deep neural networks (DNNs) performs better than other models in some cases by investigating statistical properties of DNNs for non-smooth functions.
While DNNs have empirically shown higher performance than other standard methods, understanding its mechanism is still a challenging problem.  From an aspect of the statistical theory, it is known many standard methods attain the optimal rate of generalization errors for smooth functions in large sample asymptotics, and thus it has not been straightforward to find theoretical advantages of DNNs.  This paper fills this gap by considering learning of a certain class of non-smooth functions, which was not covered by the previous theory.  We derive the generalization error of estimators by DNNs with a ReLU activation, and show that convergence rates of the generalization by DNNs are almost optimal to estimate the non-smooth functions, while some of the popular models do not attain the optimal rate. 
In addition, our theoretical result provides guidelines for selecting an appropriate number of layers and edges of DNNs.
We provide numerical experiments to support the theoretical results.
\end{abstract}


\section{Introduction} \label{submission} 

\textit{Deep neural networks} (DNNs) have shown outstanding performance on various tasks of data analysis \cite{schmidhuber2015deep, lecun2015deep}.
Enjoying their flexible modeling by a multi-layer structure and many elaborate computational and optimization techniques, DNNs empirically achieve higher accuracy than many other machine learning methods such as kernel methods \cite{hinton2006fast, le2011optimization, kingma14adam}.
Hence, DNNs are employed in many successful applications, such as image analysis \cite{he2016deep}, medical data analysis \cite{fakoor2013using}, natural language processing \cite{collobert2008unified}, and others.

Despite such outstanding performance of DNNs, little is yet known why DNNs outperform the other methods.
Without sufficient understanding, practical use of DNNs could be inefficient or unreliable.
To reveal the mechanism, numerous studies have investigated theoretical properties of neural networks from various aspects.
The approximation theory has analyzed the expressive power of neural networks \cite{cybenko1989approximation, barron1993universal, bengio2011expressive, montufar2014number, yarotsky2017error, petersen2017optimal,bolcskei2017optimal}, the statistical learning theory elucidated generalization errors \cite{barron1994approximation, neyshabur2015norm, schmidt2017nonparametric, zhang2017understanding, suzuki2017fast}, and the optimization theory has discussed the landscape of the objective function and dynamics of learning\cite{baldi1989neural,fukumizu2000local, dauphin2014identifying, kawaguchi2016deep, soudry2016no}.

One limitation in the existing statistical analysis of DNNs is a  \textit{smoothness assumption} for data generating processes,   
which requires that data $\{(Y_i,X_i)\}$ are given by 
\begin{align*}
    Y_i = f(X_i) + \xi_i, ~\xi_i\sim \mN(0,\sigma^2),
\end{align*}
where $f$ is a $\beta$-times differentiable function with $D$-dimensional input.  
With this setting, however, many popular methods such as kernel methods, Gaussian processes, series methods, and so on, as well as DNNs, achieve a bound for generalization errors as
\begin{align*}
    O \left( n^{-2\beta/(2\beta + D)} \right), \qquad (n\to\infty).
\end{align*}
This is known to be a minimax optimal rate of generalization with respect to sample size $n$ \cite{stone1982optimal, tsybakov2003introduction, gine2015mathematical}, and 
hence, as long as we employ the smoothness assumption, it is not easy to show a theoretical evidence for the empirical advantage of DNNs.

This paper considers estimation of \textit{non-smooth} functions for the data generating processes to break the difficulty.  
Specifically, we discuss a nonparametric regression problem with a class of \textit{piecewise smooth functions}, which may be non-smooth on the boundaries of pieces in their domains.
Then, we derive a rate of generalization errors with the least square and Bayes estimators by DNNs of the ReLU activation as
\begin{align*}
    O \left(\max\left\{ n^{-2\beta/(2\beta + D)} , n^{-\alpha/(\alpha + D-1)} \right\} \right),\qquad (n\to\infty)
\end{align*}
up to log factors (Theorems \ref{thm:non-bayes}, \ref{thm:bayes}, and Corollary \ref{cor:bayes}).
Here, $\alpha$ and $\beta$ denote a degree of smoothness of piecewise smooth functions, and $D$ is the dimensionality of inputs.
We prove also that this rate of generalizations by DNNs is optimal in the minimax sense (Theorem \ref{thm:minimax}).
In addition, we show that some of other standard methods, such as kernel methods and orthogonal series methods, are not able to achieve this optimal rate.  Our results thus show that DNNs certainly have a theoretical advantage under the non-smooth setting. We will provide some numerical results supporting our results.

The contributions of this paper are as follows:
    \begin{itemize}
        \setlength{\parskip}{0cm}
        \setlength{\itemsep}{0cm}
        \item We derive a rate of convergence of the generalzation errros in the estimators by DNNs for the class of piecewise smooth functions.
        Our convergence results are more general than existing studies, since the class is regarded as a generalization of smooth functions.
        \item We prove that DNNs theoretically outperform other standard methods for data from non-smooth generating processes, as a consequence the proved convergence rate of generalization error.
        \item We provide a practical guideline on the structure of DNNs; namely, we show a necessary number of layers and parameters of DNNs to achieve the rate of convergence.
        It is shown in Table \ref{table:architecture}.
    \end{itemize}

All of the proofs are deferred to the supplementary material.

\begin{table}[htbp]
\vskip 0.15in
\begin{center}
\begin{small}
\begin{sc}
\begin{tabular}{lc}
\toprule
Element & Number \\
\midrule
\# of layers    & $ \leq c  (1+\max\{\beta/D,\alpha/2(D-1)\})$\\
\# of parameters      & $ c' n^{\max\{D/(2\beta + D),(D-1)/(\alpha + D - 1)\}}$\\
\bottomrule
\end{tabular}
\end{sc}
\end{small}
\end{center}

\caption{Architecture for DNNs which are necessary to achieve the optimal rate of generalization errors. $c,c'>0$ are some constants.\label{table:architecture}}
\end{table}

\subsection{Notation}

We use notations $I := [0,1]$ and 
$\N$ for natural numbers.
The $j$-th element of vector $b$ is denoted by $b_{j}$, and   $\|\cdot\|_q := (\sum_j b_j^q)^{1/q}$ is the $q$-norm ($q \in [0,\infty]$).
$\vect(\cdot)$ is a vectorization operator for matrices. 
For $z \in \N$, $[z] := \{1,2,\ldots,z\}$ is the set of positive integers no more than $z$.
For a measure $P$ on $I$ and a function $f:I \to \R$, $\|f\|_{L^2(P)}:= ( \int_I |f(x)|^2dP(x))^{1/2}$ denotes the $L^2(P)$ norm.  
$\otimes$ denotes a tensor product, and $\bigotimes_{j \in  [J]}x_j := x_1 \otimes \cdots \otimes x_J$ for a sequence $\{x_j\}_{j \in [J]}$.


\section{Regression with Non-Smooth Functions}


\subsection{Regression Problem}

In this paper, we use the $D$-dimensional cube $I^D$ ($D \geq 2$) for the domain of data.
Suppose we have a set of observations $(X_i,Y_i) \in  I^D \times \R$ for $i \in [n]$ which is independently and identically distributed with the data generating process
\begin{align}
    Y_i = f^*(X_i) + \xi_i, \label{eq:reg}
\end{align}
where $f^* :I^D \to \R$ is an unknown true function and $\xi_i$ is  Gaussian noise with mean $0$ and variance $\sigma^2 > 0$ for $i \in [n]$.
We assume that the marginal distribution of $X$ on $I^D$ has a positive and bounded density function $P_X(x)$.

The goal of the regression problem is to estimate $f^*$ from the set of observations $\mD_n := \{(X_i,Y_i)\}_{i \in [n]}$.
With an estimator $\hat{f}$, its performance is measured by the $L^2(P_X)$ norm: $\|\hat{f} - f^*\|_{L^2(P_X)}^2 = \Ep_{X \sim P_X} [ ( \hat{f}(X) - f^*(X))^2 ]$.
There are various methods to estimate $f^*$ and their statistical properties are extensively investigated (For summary, see \cite{wasserman2006all} and \cite{tsybakov2003introduction}).

A classification problem can be also analyzed through the regression framework.
For instance, consider a $Q$-classes classification problem with covariates $X_i$ and labels $Z_i \in [Q]$ for $i \in [n]$.
To describe the classification problem, we consider a $Q$-dimensional vector-valued function $f^*(x) = (f_1^*(x) ,...,f_Q^*(x))$ and a generative model for $Z_i$  as $Z_i = \argmax_{q \in [Q]} f_q^*(X_i)$.
Estimating $f^*$ can solve the classification problem.
(For summary, see \cite{steinwart2008support}).

\subsection{Definition of Non-Smooth Function}

\begin{figure}
\begin{center}
\includegraphics[width=0.5\hsize]{./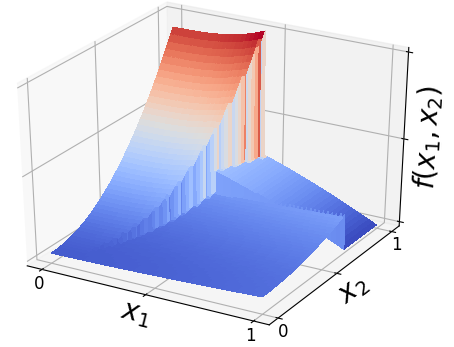}
\caption{An example of piecewise smooth functions with a $2$-dimensional support. The support is divided into three pieces and the function is discontinuous on their boundaries. \label{fig:p-smooth}}
\end{center}
\end{figure}

To describe non-smoothness of functions, we introduce a notion of \textit{piecewise smooth functions} which have a support divided into several pieces and smooth only within each of the pieces.
On boundaries of the pieces, piecewise smooth functions are non-smooth, i.e. non-differentiable and even discontinuous.
Figure \ref{fig:p-smooth} shows an example of piecewise smooth functions.

As preparation, we introduce notions of (i) smooth functions and (ii) pieces in supports, and  (iii) piecewise smooth functions.  In the sequel, for a set $R\subset I^D$, let $\mone_{R}:I^D \to \{0,1\}$ denote the indicator function of $R$; i.e., $\mone_{R}(x) = 1$ if $x \in R$, and $\mone_{R}(x) = 1$ otherwise.

\textbf{(i). Smooth Functions}

We use the standard \textit{the H\"older space} for smooth functions.
With a domain  $\Omega$ in $\R^D$, let $H^{\beta}(\Omega)$ be the H\"older space on $\Omega$, which is the space of functions $f:\Omega \to \R$ such that they are $\lfloor \beta \rfloor$-times differentiable and the $\lfloor \beta \rfloor$-th derivatives are $\beta- \lfloor \beta \rfloor$-H\"older continuous.
For the precise definition, see Appendix \ref{sec:holder} or \cite{gine2015mathematical}.

\textbf{(ii). Pieces in Supports}

To describe pieces in supports,  we use the class of {\em horizon functions} \cite{petersen2017optimal}. 
Preliminarily, we consider a smooth function $h\in H^\alpha(I^{D-1})$.
Then, we define a horizon function $\Psi_h:I^D \to \{0,1\}$ as
\begin{align*}
    \Psi_h := \Psi(x_1,\ldots,x_{d-1}, x_d \pm  h(x_1,\ldots,x_{d-1},x_{d+1},\ldots,x_D),x_{d+1},...,x_D),
\end{align*}
for some $d \in [D]$, where $\Psi:I^D \to \{0,1\}$ is the Heaviside function such that $\Psi(x) = \mone_{\{x \in I^D \mid x_d \geq 0\}}$.

For each horizon function, we define a \textit{basis piece} $A \subset I^D$ such that there exist $\Psi_h$ such that
\begin{align*}
    A = \left\{ x \in I^D \mid \Psi_h(x)=1 \right\}.
\end{align*}
A basis piece is regarded as one side of surfaces by $h$.
Additionally, we introduce a restrict for $A$ as a transformed sphere, namely, we consider $\Psi_h$ such that there exists an $\alpha$-smooth embedding $e:\{x \in \R^D \mid \|x\|_2 \leq 1\} \to \R^D$ satisfying $A = I^D \cap \mbox{Image}(e)$ (detail is provided in Appendix \ref{sec:holder}).
The notion of basis pieces is an extended version of the boundary fragment class \cite{dudley1974metric,mammen1999smooth} which is dense in a class of all convex sets in $I^D$ when $\alpha = 2$.

We define a {\em piece} by the intersection of $J$ basis pieces; namely, the set of pieces is defined by 
\begin{align*}
    \mR_{\alpha,J} := \left\{  R\subset [0,1]^D\mid \text{there are basic pieces } A_1,\ldots,A_J \text{ such that } R = \cap_{j=1}^J A_j \right\}.
\end{align*}

Intuitively, $R \in \mR_{\alpha,J}$ is a set with piecewise $\alpha$-smooth boundaries.
Also, by considering intersections of $J$ basis pieces, $\mR_{\alpha,J}$ contains a set with non-smooth boundaries.
In Figure \ref{fig:p-smooth}, there are three pieces from $\mR_{\alpha,J}$ in the support of the function.

\textbf{(iii). Piecewise Smooth Functions}

Using $H^\beta(I^D)$ and $\mR_{\alpha,J}$,  we define piecewise smooth functions.
Let $M \in \N$ be a finite number of pieces of the support $I^D$.
We introduce the set of piecewise smooth functions by 
\begin{align*}
    \mF_{M,J,\alpha,\beta} := \left\{ \sum_{m=1}^M f_m \otimes \mone_{R_m} : f_m \in H^{\beta}(I^D), R_m \in \mR_{\alpha,J} \right\}.
\end{align*}
Since $f_m(x)$ realizes only when $x \in R_m$, the notion of $\mF_{M,J,\alpha,\beta}$ can express a combination of smooth functions on each piece $R_m$.
Hence, functions in $\mF_{M,J,\alpha,\beta}$ are non-smooth (and even discontinuous) on boundaries of $R_m$.
Obviously, $H^\beta(I^D) \subset \mF_{M,J,\alpha,\beta}$ with $M=1$ and $R_1 = I^D$, hence the notion of piecewise smooth functions can describe a wider class of functions.

\section{Estimators with DNNs and Theoretical Analysis}


\subsection{Models by Deep Neural Networks}

Let $L \in \N$ be the number of layers in DNNs.
For $\ell \in [L+1]$, let $D_\ell \in \N$ be the dimensionality of variables in the $\ell$-th layer.
For brevity, we set $D_{L+1}=1$, i.e., the output is one-dimensional.
We define $A_\ell \in \R^{D_{\ell + 1} \times D_\ell}$ and $b_\ell \in \R^{D_\ell}$ be matrix and vector parameters to give the transform of $\ell$-th layer.
The \textit{architecture} $\Theta$ of DNN is a set of $L$ pairs of $(A_\ell,b_\ell)$:
\begin{align*}
    \Theta := ((A_1,b_1),...,(A_{L},b_{L})).
\end{align*}
We define $|\Theta| := L$ be a number of layers in $\Theta$, $\|\Theta\|_{0} := \sum_{\ell \in [L]} \|\vect (A_\ell)\|_0 + \|b_\ell\|_0$ as a number of non-zero elements in $\Theta$, and $\|\Theta\|_\infty := \max\{\max_{\ell \in [L]} \|\vect (A_\ell)\|_\infty,\max_{\ell \in [L]} \|b_\ell\|_\infty\}$ be the largest absolute value of the parameters in $\Theta$.

For an activation function $\eta: \R^{D'} \to \R^{D'}$ for each $D' \in \N$, this paper considers the ReLU activation $\eta(x) = (\max\{x_d,0\})_{d\in [D']}$.

The model of neural networks with architecture $\Theta$ and activation $\eta$ is the function $G_\eta[\Theta] : \R^{D_1} \to \R$, which is defined inductively as 
\begin{align*}
    G_\eta[\Theta](x) = x^{(L+1)},
\end{align*}
and it is inductively defined as
\begin{align*}
    &x^{(1)} := x,~~~~x^{(\ell + 1)} := \eta(A_{\ell} x^{(\ell)} +  b_{\ell}), \mbox{~for~}\ell \in [L],
\end{align*}
where $L=|\Theta|$ is the number of layers.
The set of model functions by DNNs is thus given by 
\begin{align*}
    &\mF_{NN,\eta}(S,B,L') := \Bigl\{ G_\eta[\Theta]:I^D \to \R \mid \|\Theta\|_{0} \leq S, \|\Theta\|_{\infty} \leq B, |\Theta| \leq L' \Bigr\},
\end{align*}
with $S \in \N$, $B > 0$, and $L' \in \N$.
Here, $S$ bounds the number of non-zero parameters of DNNs by $\Theta$, namely, the number of edges of an architecture in the networks.
This also describes sparseness of DNNs. 
$B$ is a bound for scales of parameters.


\subsection{Least Square Estimator}

Using the model of DNNs, we define a least square estimator by empirical risk minimization.
Using the observations $\mD_n$, we consider the minimization problem with respect to parameters of DNNs as
\begin{align}
    \hat{f}^{L} \in \argmin_{f \in \mF_{NN,\eta}(S,B,L)} \frac{1}{n}\sum_{i \in [n]} (Y_i - f(X_i))^2, \label{opt:erm}
\end{align}
and use $\hat{f}^{L}$ for an estimator of $f^*$.

Note that the problem \eqref{opt:erm} has at least one minimizer since the parameter set $\Theta$ is compact and $\eta$ is continuous.
If necessary, we can add a regularization term for the problem \eqref{opt:erm}, because it is not difficult to extend our results to an estimator with regularization.
Furthermore, we can apply the early stopping techniques, since they play a role as the regularization \cite{lecun2015deep}.
However, for simplicity, we confine our arguments of this paper in the least square.

We investigate theoretical aspects of convergence properties of $\hat{f}^{L}$ with a ReLU activation.
\begin{theorem} \label{thm:non-bayes}
    Suppose $f^* \in \mF_{M,J,\alpha,\beta}$.
    Then, there exist constants $c_1,c'_1,C_{L}>0, s \in \N\backslash \{1\}$, and $(S,B,L)$ satisfying
    \begin{enumerate}
        \setlength{\parskip}{0cm}
          \setlength{\itemsep}{0cm}
        \item[(i)] $S = c'_1 \max\{n^{D/(2\beta + D)}, n^{(D-1)/(\alpha + D - 1)}\}$,
        \item [(ii)] $B \geq c_1n^{s}$,
        \item [(iii)] $L \leq c_1  (1+\max\{\beta/D,\alpha/2(D-1)\})$,
    \end{enumerate}
    such that $\hat{f}^{L} \in \mF_{NN,\eta}(S,B,L)$ provides
    \begin{align}
        &\|\hat{f}^{L}-f^*\|_{L^2(P_X)}^2 \leq C_{L} M^2 J^2 \max\{n^{-2\beta/(2\beta + D)}, n^{-\alpha/(\alpha + D - 1)}\} (\log n)^2, \label{ineq:thm_non_bayes}
    \end{align}
    with probability at least $1-c_1 n^{-2}$. 
\end{theorem}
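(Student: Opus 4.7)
The plan is to follow the classical sieve strategy for nonparametric least squares: decompose
\[
\|\hat f^L - f^*\|_{L^2(P_X)}^2 \;\lesssim\; \underbrace{\inf_{f \in \mF_{NN,\eta}(S,B,L)} \|f - f^*\|_{L^2(P_X)}^2}_{\text{approximation}} \;+\; \underbrace{\text{(stochastic error)}}_{\text{estimation}},
\]
and then choose $(S,B,L)$ as in (i)--(iii) so that the two contributions are of the same order.

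For the approximation part, write $f^* = \sum_{m=1}^M f_m \otimes \mone_{R_m}$ with $R_m = \bigcap_{j=1}^J A_{m,j}$, where each $A_{m,j}$ is the set defined by a horizon function associated with $h_{m,j} \in H^\alpha(I^{D-1})$. I would build a ReLU DNN $\tilde f$ as a sum of $M$ product blocks, where the $m$-th block is composed of: (a) a Yarotsky-type subnetwork approximating $f_m$ in $L^\infty$ with error $\epsilon_\beta := n^{-\beta/(2\beta+D)}$ using $O(n^{D/(2\beta+D)})$ nonzero parameters at constant depth; (b) for each $j \in [J]$, an indicator subnetwork that first approximates $h_{m,j}$ by a ReLU net $\tilde h_{m,j}$ with $L^\infty$ error $\epsilon_\alpha := n^{-\alpha/(\alpha+D-1)}$, costing $O(n^{(D-1)/(\alpha+D-1)})$ parameters, and then composes it with a piecewise-linear surrogate of the Heaviside of transition width $\epsilon_\alpha$; and (c) a logarithmic-depth Yarotsky multiplication subnetwork that combines the $1+J$ factors. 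The $L^2$ error of the indicator block is controlled by a slab argument: $\tilde\mone_{A_{m,j}}$ and $\mone_{A_{m,j}}$ differ only on the tube $\{|x_d - h_{m,j}(x_{-d})| \lesssim \epsilon_\alpha\}$, which has Lebesgue volume $O(\epsilon_\alpha)$ because the other $D-1$ coordinates live in $I^{D-1}$; since $P_X$ is bounded, its squared $L^2$ error is $O(\epsilon_\alpha) = O(n^{-\alpha/(\alpha+D-1)})$. Summing the $M$ blocks and $J$ factors via the triangle inequality produces the $M^2 J^2$ prefactor and the final rate $\max\{n^{-2\beta/(2\beta+D)}, n^{-\alpha/(\alpha+D-1)}\}$. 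The total parameter count is of the order stated in (i); the slope $\sim 1/\epsilon_\alpha$ of the Heaviside surrogate together with the Yarotsky coefficients makes the largest weight polynomial in $n$, giving (ii); and all sub-blocks have constant or $\log n$ depth, giving (iii).

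For the estimation error, I would invoke the standard covering-number bound for the sparse ReLU class (Schmidt-Hieber style),
\[
\log \mN\!\bigl(\delta, \mF_{NN,\eta}(S,B,L), \|\cdot\|_\infty\bigr) \;\lesssim\; S L \log(S B L/\delta),
\]
and combine it with Bernstein-type chaining for least-squares residuals over a uniformly bounded sieve (after a $O(\log n)$ sup-norm truncation of $\hat f^L$). This yields, with probability at least $1 - c_1 n^{-2}$,
\[
\|\hat f^L - f^*\|_{L^2(P_X)}^2 \;\le\; C\!\left( \inf_{f \in \mF_{NN,\eta}} \|f - f^*\|_{L^2(P_X)}^2 + \frac{S L \log(SBL)\, \log n}{n} \right).
\]
Plugging in the prescribed $S$ and $L$, the stochastic term matches the approximation rate up to a $(\log n)^2$ factor, which gives \eqref{ineq:thm_non_bayes}.

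The main obstacle is the joint calibration of the indicator block. The Heaviside surrogate must be sharp enough that its slab-error matches $\epsilon_\alpha$, while $\tilde h_{m,j}$ must simultaneously achieve the same $\epsilon_\alpha$ in $L^\infty$ within the allotted $O(n^{(D-1)/(\alpha+D-1)})$ parameters. Verifying (a) that the implied weight bound $B$ stays polynomial in $n$ so that $\log(SBL) = O(\log n)$ and the stochastic term is not inflated beyond the $(\log n)^2$ factor, (b) that the slab volume bound uses only boundedness of $P_X$ and not higher regularity, and (c) that the logarithmic-depth multiplication block respects the parameter budget and does not amplify propagation errors beyond the claimed rate, is where the bulk of the bookkeeping lies.
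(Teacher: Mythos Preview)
Your proposal is correct and follows essentially the same route as the paper's proof: the paper also splits into an approximation step (building $\dot f$ out of Yarotsky blocks for each $f_m$, Petersen--Voigtlaender horizon-function blocks for each $\mone_{A_{m,j}}$, and ReLU multiplication/summation subnetworks) and an estimation step (basic least-squares inequality plus chaining with the Anthony--Bartlett/Schmidt-Hieber covering bound, then Bernstein to pass from $\|\cdot\|_n$ to $\|\cdot\|_{L^2(P_X)}$), with the same balancing $\epsilon_\beta \asymp n^{-\beta/(2\beta+D)}$ and $\epsilon_2 \asymp n^{-\alpha/(2\alpha+2D-2)}$. The only cosmetic difference is that you unpack the horizon-indicator approximation yourself via the slab argument, whereas the paper invokes Lemma~3.4 of Petersen--Voigtlaender as a black box giving $\|\mone_{A}-G_\eta[\Theta]\|_{L^2}\le\epsilon$ with $\|\Theta\|_0\lesssim \epsilon^{-2(D-1)/\alpha}$ directly.
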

Proof of Theorem \ref{thm:non-bayes} is a combination of a set estimation \cite{dudley1974metric, mammen1995asymptotical}, an approximation theory of DNNs \cite{yarotsky2017error, petersen2017optimal,bolcskei2017memory}, and an applications of the empirical process techniques \cite{koltchinskii2006local, gine2015mathematical, suzuki2017fast}.


The rate of convergence in Theorem \ref{thm:non-bayes} is simply interpreted as follows.
The first term $n^{-2\beta/(2\beta + D)}$ describes an effect of estimating $f_m \in H^\beta(I^D)$ for $m \in [M]$.
The rate corresponds to the minimax optimal convergence rate of generalization errors for estimating smooth functions in $H^\beta(I^D)$ (For a summary, see \cite{tsybakov2003introduction}).
The second term $n^{-\alpha / (\alpha + D-1)}$ reveals an effect from estimation of $\mone_{R_m}$ for $m \in [M]$ through estimating the boundaries of $R_m \in \mR_{\alpha,J}$.
The same rate of convergence appears in a problem for estimating sets with smooth boundaries \cite{mammen1995asymptotical}.

We remark that a larger number of layers decreases $B$.
Considering the result by \cite{bartlett1998sample}, which shows that large values of parameters make the performance of DNNs worse, the above theoretical result suggests that a deep structure can avoid the performance loss caused by large parameters.   

We can consider an error from optimization independent to the statistical generalization.
The following corollary provides the statement.
\begin{corollary}\label{cor:opt}
    If a learning algorithm outputs $\check{f}^L \in \mF_{NN,\eta}(S,B,L)$ such that
    \begin{align*}
         n^{-1}\sum_{i \in [n]}(Y_i - \check{f}^L(X_i))^2 - (Y_i - \hat{f}^L(X_i))^2 \leq \Delta_n,
    \end{align*}
    with a positive parameter $\Delta_n$, then the following holds:
\begin{align*}
     &\Ep_{f^*} \left[ \|\check{f}^L-f^*\|_{L^2(P_X)}^2 \right] \leq C_{L} \max\{n^{-2\beta/(2\beta + D)}, n^{-\alpha/(\alpha + D - 1)}\} \log n + \Delta_n.
\end{align*}
\end{corollary}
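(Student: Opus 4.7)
The plan is to revisit the proof of Theorem~\ref{thm:non-bayes} and carry the extra $\Delta_n$ slack through the standard ``basic inequality'' argument for least squares over $\mF_{NN,\eta}(S,B,L)$. By hypothesis $\check{f}^L$ is a $\Delta_n$-approximate empirical risk minimizer, so its empirical squared loss is at most that of any competitor $f \in \mF_{NN,\eta}(S,B,L)$ plus $\Delta_n$; I can therefore reuse every step of the proof of Theorem~\ref{thm:non-bayes} with $\hat{f}^L$ replaced by $\check{f}^L$, picking up a single additive $\Delta_n$ at the one place where exact optimality of the ERM was invoked.

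Concretely, first I would let $\tilde{f} \in \mF_{NN,\eta}(S,B,L)$ be the DNN approximant constructed in the proof of Theorem~\ref{thm:non-bayes}, which satisfies the approximation bound of order $\max\{n^{-2\beta/(2\beta+D)},\,n^{-\alpha/(\alpha+D-1)}\}$ in $L^2(P_X)$. Writing $Y_i=f^*(X_i)+\xi_i$ and expanding the squared residuals in $n^{-1}\sum_i(Y_i-\check{f}^L(X_i))^2 \leq n^{-1}\sum_i(Y_i-\tilde{f}(X_i))^2 + \Delta_n$ yields
\begin{align*}
    \frac{1}{n}\sum_{i \in [n]}(\check{f}^L(X_i)-f^*(X_i))^2
    &\leq \frac{1}{n}\sum_{i \in [n]}(\tilde{f}(X_i)-f^*(X_i))^2 \\
    &\quad + \frac{2}{n}\sum_{i \in [n]} \xi_i\bigl(\check{f}^L(X_i)-\tilde{f}(X_i)\bigr) + \Delta_n.
\end{align*}
Next, I would convert this empirical-norm inequality to an $L^2(P_X)$ statement using exactly the concentration and localization machinery already applied to $\hat{f}^L$: a uniform deviation bound for the centered squared loss on $\mF_{NN,\eta}(S,B,L)$ together with a Dudley/chaining control of the noise cross-term, both governed by the covering numbers of the DNN class that were already estimated for Theorem~\ref{thm:non-bayes}. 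The approximation term contributes the $n^{-2\beta/(2\beta+D)}$ piece, the network-class complexity contributes the $n^{-\alpha/(\alpha+D-1)}$ piece, and $\Delta_n$ passes through additively because it is deterministic.

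Finally, to upgrade the resulting high-probability bound to an expectation bound, I would use that both $\check{f}^L$ and $f^*$ are uniformly bounded on $I^D$ (since $\|\Theta\|_\infty\leq B$ and piecewise-H\"older functions on $I^D$ are bounded), so the squared $L^2(P_X)$-error grows at most polynomially in $n$; integrating the high-probability bound against the $O(n^{-2})$ failure probability from Theorem~\ref{thm:non-bayes} contributes only an $O(n^{-1})$ remainder, absorbed into $C_L$ times the main rate.

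The main obstacle I anticipate is ensuring that the additive $\Delta_n$ does not interact with the peeling/localization step that sharpens the naive empirical-process bound into the fast rate $\max\{n^{-2\beta/(2\beta+D)},\,n^{-\alpha/(\alpha+D-1)}\}$. Localization defines a critical radius $r_n^2$ through a fixed-point equation involving the local Rademacher complexity, and a nonzero slack could in principle enlarge this radius. However, since $\Delta_n$ is a deterministic optimization budget independent of both the sample and the localization scale, one only needs to replace the critical radius by $\max\{r_n^2,\Delta_n\}$, which is exactly what produces the $+\Delta_n$ in the final bound. Verifying that this substitution does not disturb the constants hidden in $C_L$ is the only nonroutine piece of the plan.
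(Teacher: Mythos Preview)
Your proposal is correct and coincides with the paper's (implicit) approach: the paper does not supply a separate proof of this corollary, evidently regarding it as an immediate consequence of the proof of Theorem~\ref{thm:non-bayes}. Carrying the additive slack $\Delta_n$ through the basic inequality---so that \eqref{ineq:basic2} and subsequently \eqref{ineq:bound_var3}--\eqref{ineq:bound_var5} each acquire an extra $+\Delta_n$---and then converting the high-probability statement to an expectation bound via the uniform boundedness established in Lemma~\ref{lem:bounded} is exactly the intended route; your remark that the localization radius simply becomes $\max\{r_n^2,\Delta_n\}$ is the correct way to handle the one nonroutine point.
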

Here, $\Ep_{f^*}[\cdot]$ denotes an expectation with respect to the true distribution of $(X,Y)$.
Applying results on the magnitude of $\Delta$ (e.g. \cite{kawaguchi2016deep}), we can evaluate generalization including optimization errors.

\subsection{Bayes Estimator}

We define a Bayes estimator for DNNs which can avoid the non-convexity problem in optimization.
Fix architecture $\Theta$ and $\mF_{NN,\eta}(S,B,L)$ with given $S,B$ and $L$.
Then, a prior distribution for $\mF_{NN,\eta}(S, B, L)$ is defined through providing distributions for the parameters contained in $\Theta$.
Let $\Pi_{\ell}^{(A)}$ and $\Pi_{\ell}^{(b)}$ be distributions of $A_\ell$ and $b_\ell$ as $A_\ell \sim \Pi_{\ell}^{(A)}$ and $b_\ell \sim \Pi_{\ell}^{(b)}$ for $\ell \in [L]$.
We set $\Pi_{\ell}^{(A)}$ and $\Pi_{\ell}^{(b)}$ such that each of the $S$ parameters of $\Theta$ is uniformly distributed on $[-B, B]$, and the other parameters degenerate at $0$.
Using these distributions, we define a prior distribution $\Pi_\Theta$ on $\Theta$ by $\Pi_\Theta := \bigotimes_{\ell \in [L]} \Pi_{\ell}^{(A)} \otimes \Pi_{\ell}^{(b)}$.
Then, a prior distribution for $f \in \mF_{NN,\eta}(S,B,L)$ is defined by 
\begin{align*}
    \Pi_f(f) := \Pi_\Theta(\Theta :  G_{\eta}[\Theta] = f).
\end{align*}

We consider the posterior distribution for $f$.
Since the noise $\xi_i$ in \eqref{eq:reg} is Gaussian with its variance $\sigma^2$, the posterior distribution is given by 
\begin{align*}
    &d\Pi_f(f | \mD_n) = \frac{\exp(-\sum_{i \in [n]}(Y_i - f(X_i))^2/\sigma^{2})d\Pi_f(f)}{\int \exp(-\sum_{i \in [n]}(Y_i - f'(X_i))^2/\sigma^{2})d\Pi_f(f')}.
\end{align*}

Note that we do not discuss computational issues of the Bayesian approach since the main focus is a theoretical aspect.
To solve the computational problems, see \cite{hernandez2015probabilistic} and others.

We provide theoretical analysis on the speed of contraction for the posterior distribution.
Same as the least square estimator cases, we consider a ReLU activation function.
\begin{theorem} \label{thm:bayes}
    Suppose $f^* \in \mF_{M,J,\alpha,\beta}$.
    Then, there exist constants $c_2,c'_2,C_{B}>0, s \in \N \backslash \{1\}$, architecture $\Theta: \|\Theta\|_{0} \leq S, \|\Theta\|_{\infty} \leq B, |\Theta| \leq L$ satisfying following conditions:
    \begin{enumerate}
        \setlength{\parskip}{0cm}
          \setlength{\itemsep}{0cm}
        \item[(i)] $S = c'_2 \max\{n^{D/(2\beta + D)}, n^{(2D-2)/(2\alpha + 2D - 2)}\}$,
        \item [(ii)] $B \geq c_{2}n^{s}$,
        \item [(iii)] $L \leq c_{2}  (1+\max\{\beta/D,\alpha/2(D-1)\})$,
    \end{enumerate}
     and a prior distribution $\Pi_f$, such that the  posterior distribution $\Pi_f(\cdot | \mD_n)$ provides
    \begin{align}
        &\Ep_{f^*} \Bigl[ \Pi_f \Bigl( f: \| f-f^*\|_{L^2(P_X)}^2 \geq rC_B \max\{n^{-2\beta/(2\beta + D)}, n^{-\alpha/(\alpha + D - 1)}\} (\log n)^2  | \mD_n \Bigr) \Bigr] \notag \\
        &\leq \exp \left( -r^2c_2\max\{n^{D/(2\beta + D)}, n^{(D-1)/(\alpha + D - 1)}\}  \right),\label{ineq:thm_bayes} 
    \end{align}
    for all $r > 0$.
\end{theorem}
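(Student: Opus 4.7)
The plan is to follow the standard posterior contraction framework of Ghosal–Ghosh–van der Vaart, which reduces the theorem to three technical conditions about the prior $\Pi_f$: a prior mass condition on a Kullback–Leibler neighborhood of $f^*$, an entropy bound for a sieve of candidate functions, and a small prior mass assigned to the complement of the sieve. Because the prior is supported entirely on $\mF_{NN,\eta}(S,B,L)$, the sieve can simply be taken to be this class, which makes the third condition automatic. The other two then inherit a direct form tailored to ReLU networks.

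I would begin by importing the approximation result that underlies Theorem \ref{thm:non-bayes}: for $f^* \in \mF_{M,J,\alpha,\beta}$, the given choice of $(S,B,L)$ admits a network $\tilde f = G_\eta[\tilde\Theta] \in \mF_{NN,\eta}(S,B,L)$ with $\|\tilde f - f^*\|_\infty^2 \lesssim \epsilon_n^2 := \max\{n^{-2\beta/(2\beta+D)}, n^{-\alpha/(\alpha+D-1)}\}(\log n)^2$. The parameters $\tilde\Theta$ lie in $[-B,B]$ and have at most $S$ nonzero entries. To verify the prior mass condition, I would use a standard Lipschitz bound expressing the network as a function of its parameters: for any two architectures $\Theta_1,\Theta_2$ agreeing on the same sparsity pattern, $\|G_\eta[\Theta_1]-G_\eta[\Theta_2]\|_\infty \leq C_L \|\Theta_1 - \Theta_2\|_\infty$, where $C_L$ is polynomial in $B$ and $L$. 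Since each of the $S$ active parameters is uniform on $[-B,B]$, the prior mass of a $\delta$-box around $\tilde\Theta$ is $(\delta/(2B))^S$; choosing $\delta \asymp \epsilon_n/C_L$ yields
\begin{align*}
\log \Pi_f(\|f-\tilde f\|_\infty \leq \epsilon_n) \gtrsim -S\log(B/\epsilon_n) \gtrsim -c\,n\epsilon_n^2,
\end{align*}
because $S \asymp n\epsilon_n^2/(\log n)^2$ and $\log(B/\epsilon_n) = O(\log n)$ using $B = n^s$.

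Next, I would control the metric entropy of the sieve $\mF_{NN,\eta}(S,B,L)$. Standard covering arguments for sparse ReLU networks (as used in the proof of Theorem \ref{thm:non-bayes}) give $\log N(\zeta, \mF_{NN,\eta}(S,B,L), \|\cdot\|_\infty) \leq c\, S L \log(SLB/\zeta)$, and at the scale $\zeta = \epsilon_n$ this bound is again of order $n\epsilon_n^2$ up to the prescribed logarithmic factors. Combining the prior mass lower bound with the entropy upper bound and invoking the Gaussian likelihood structure of the regression model (which lets KL neighborhoods be compared to sup-norm neighborhoods on the uniformly bounded sieve), the posterior contraction theorem yields the exponential-type bound in \eqref{ineq:thm_bayes}, with the factor $r^2$ tracking the usual inflation of the neighborhood radius in the Ghosal–van der Vaart bound.

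The main obstacle is matching the constants in the prior mass and entropy inequalities when $B = n^s$ is taken large: the Lipschitz constant $C_L$ grows like $(D_{\max}B)^L$, and $\log C_L$ contributes a polylogarithmic overhead that must be absorbed into the $(\log n)^2$ in $\epsilon_n^2$. A careful bookkeeping, using that $L$ is bounded by a constant depending only on $\alpha,\beta,D$ (condition (iii)) and that $\log B \lesssim \log n$, shows this overhead is indeed absorbed, so the required scaling $S\asymp n\epsilon_n^2/(\log n)^2$ is consistent with both conditions simultaneously. Once that balance is verified, the conclusion follows directly from the general posterior contraction theorem.
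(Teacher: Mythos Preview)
Your overall plan---verify the Ghosal--Ghosh--van der Vaart conditions via (i) a prior-mass lower bound on a neighborhood of an approximating network, (ii) the covering-number bound for $\mF_{NN,\eta}(S,B,L)$, and (iii) full prior support on the sieve---is exactly the route the paper takes, and your bookkeeping for $S\log(B/\epsilon_n)\lesssim n\epsilon_n^2$ and for the Lipschitz constant of $\Theta\mapsto G_\eta[\Theta]$ is correct in spirit.

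There is, however, one concrete error. You state that the approximating network satisfies $\|\tilde f-f^*\|_\infty\lesssim\epsilon_n$. This is impossible: elements of $\mF_{M,J,\alpha,\beta}$ are in general discontinuous across the piece boundaries, while every ReLU network is continuous, so the sup-norm error is bounded below by a constant (half the jump size) regardless of $(S,B,L)$. The approximation result underlying Theorem~\ref{thm:non-bayes} only yields $\|\tilde f-f^*\|_{L^2}\lesssim\epsilon_n$. Consequently, you cannot reduce the KL (equivalently $\|\cdot\|_{L^2(P_X)}$ or $\|\cdot\|_n$) neighborhood of $f^*$ to a sup-norm ball around $f^*$.

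The fix is the one the paper carries out: keep the prior-mass argument in terms of $\|f-\tilde f\|_{L^\infty}$ (where your Lipschitz/parameter-box computation is valid), and use the triangle inequality
\[
\|f-f^*\|_n \;\leq\; \|f-\tilde f\|_{L^\infty} \;+\; \|\tilde f-f^*\|_n
\]
together with a Bernstein/Hoeffding concentration of $\|\tilde f-f^*\|_n$ around $\|\tilde f-f^*\|_{L^2(P_X)}$ to place the $L^\infty$-box inside the $\|\cdot\|_n$-ball of radius $O(\epsilon_n)$. The paper then first proves contraction in the empirical norm $\|\cdot\|_n$ (splitting the expected posterior into a test term, a likelihood-ratio event term, and a remainder as in \cite{vaart2011information}) and afterwards upgrades to $\|\cdot\|_{L^2(P_X)}$ by a second concentration argument. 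Once you replace your sup-norm approximation claim with the correct $L^2$ one and insert this two-step empirical-to-population transfer, the rest of your outline goes through essentially unchanged.
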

To provide proof of Theorem \ref{thm:bayes}, we additionally apply studies for statistical analysis for Bayesian nonparametrics \cite{van2008rates, vaart2011information}.

Based on the result, we define a Bayes estimator as $\hat{f}^B := \int f d \Pi_f (f | \mD_n)$, 
by the Bochner integral in $L^\infty(I^D)$.
Then, we obtain the rate of convergence of $\hat{f}^B$ by the following corollary.
\begin{corollary} \label{cor:bayes}
    With the same setting in Theorem \ref{thm:bayes}, consider $\hat{f}^B$.
    Then, we have
    \begin{align*}
        &\Ep_{f^*} \left[ \|\hat{f}^{B}-f^*\|_{L^2(P_X)}^2 \right] \leq C_{B} \max\{n^{-2\beta/(2\beta + D)}, n^{-\alpha/(\alpha + D - 1)}\} (\log n)^2.
    \end{align*}
\end{corollary}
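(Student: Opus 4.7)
The plan is to reduce the statement to Theorem \ref{thm:bayes} via Jensen's inequality followed by a standard layer-cake (tail integration) argument. The map $f \mapsto \|f - f^*\|_{L^2(P_X)}^2$ is convex on $L^2(P_X)$, and $\hat{f}^B$ is defined as the Bochner integral of $f$ against the posterior. Before invoking Jensen I would briefly confirm that $\hat{f}^B$ is well-defined in $L^\infty(I^D)$: since $\mF_{NN,\eta}(S,B,L)$ consists of at most $L$ compositions of affine maps with entries bounded by $B$ and ReLU nonlinearities, $G_\eta[\Theta](x)$ is uniformly bounded in $\Theta$ and $x \in I^D$ by a constant depending only on $(S, B, L, D)$, so the integral exists and Jensen applies to yield
\begin{align*}
\|\hat{f}^B - f^*\|_{L^2(P_X)}^2 \leq \int \|f - f^*\|_{L^2(P_X)}^2 \, d\Pi_f(f \mid \mD_n).
\end{align*}

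Next I would take expectation over $\mD_n$, use Fubini/Tonelli to swap the data-expectation with the posterior integral, and invoke the identity $\Ep[X] = \int_0^\infty \Pr(X > t)\,dt$ for nonnegative random variables to obtain
\begin{align*}
\Ep_{f^*}\!\left[\|\hat{f}^B - f^*\|_{L^2(P_X)}^2\right] \leq \int_0^\infty \Ep_{f^*}\!\left[\Pi_f\bigl(\|f-f^*\|_{L^2(P_X)}^2 > t \mid \mD_n\bigr)\right] dt.
\end{align*}
Write $\epsilon_n^2 := \max\{n^{-2\beta/(2\beta+D)},\, n^{-\alpha/(\alpha+D-1)}\}(\log n)^2$ and $N_n := \max\{n^{D/(2\beta+D)},\, n^{(D-1)/(\alpha+D-1)}\}$. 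Changing variables $t = r C_B \epsilon_n^2$ and splitting the $r$-integral at $1$, I would use the trivial bound $\Pi_f(\cdot) \leq 1$ for $r \in [0,1]$ and the tail bound $\exp(-r^2 c_2 N_n)$ from Theorem \ref{thm:bayes} for $r > 1$. Because $\int_1^\infty \exp(-r^2 c_2 N_n)\,dr = O(N_n^{-1/2})$ vanishes as $n \to \infty$, the total $r$-integral is bounded by a universal constant, and absorbing this constant into $C_B$ produces the desired rate.

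The proof is essentially mechanical once Theorem \ref{thm:bayes} is in hand; the only real subtleties are well-definedness of the Bochner integral (handled by boundedness of the parameter set) and justifying convexity of the functional to apply Jensen. The most delicate \emph{computation} is controlling the Gaussian-type tail integral $\int_1^\infty \exp(-r^2 c_2 N_n)\,dr$ by a quantity that is bounded independent of $n$, but this is immediate since $N_n \to \infty$. All of the genuinely hard work — constructing the prior, bounding prior mass near $f^*$, controlling the entropy of $\mF_{NN,\eta}(S,B,L)$, and verifying the remaining-mass condition that together yield Theorem \ref{thm:bayes} — has been done upstream, so the corollary serves primarily as the bridge from in-probability posterior contraction to an in-expectation risk bound for the posterior mean.
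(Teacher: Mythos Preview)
Your proposal is correct and is exactly the standard Jensen-plus-tail-integration argument one would use to pass from a posterior-contraction bound (Theorem~\ref{thm:bayes}) to a risk bound for the posterior mean. The paper does not spell out a separate proof of Corollary~\ref{cor:bayes} in its supplementary material, so your argument is precisely the intended (and essentially only) route; the boundedness check for the Bochner integral is also consistent with the paper's Lemma~\ref{lem:bounded}.
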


This result states that the Bayes estimator can achieve the same rate as the least square estimator shown in Theorem \ref{thm:non-bayes}.
Since the Bayes estimator does not use optimization, we can avoid the non-convex optimization problem, while the computation of the posterior and mean are not straightforward.

\section{Discussion: Why DNNs work better?}


\subsection{Optimality of the DNN Estimators}

We show optimality of the rate of convergence by the DNN estimators in Theorem \ref{thm:non-bayes} and Corollary \ref{cor:bayes}.
We employ a theory of minimax optimal rate which is known in the field of mathematical statistics \cite{gine2015mathematical}.
The theory derives a lower bound of a convergence rate with arbitrary estimators, thus we can obtain a theoretical limitation of convergence rates.

The following theorem shows the minimax optimal rate of convergence for the class of piecewise smooth functions $\mathcal{\mF}_{M,J,\alpha,\beta}$.
\begin{theorem}\label{thm:minimax}
    Consider $\bar{f}$ is an arbitrary estimator for $f^* \in \mF_{M,J,\alpha,\beta}$.
    Then, there exists a constant $C_{mm} > 0$ such that
    \begin{align*}
        &\inf_{\bar{f}} \sup_{f^* \in \mathcal{\mF}_{M,J,\alpha,\beta}} \Ep_{f^*} \left[ \|\bar{f} - f^*\|_{L^2(P_X)}^2\right]  \geq C_{mm} \max\left\{n^{-2\beta/(2\beta + D)}, n^{-\alpha / (\alpha + D-1)}\right\}.
    \end{align*}
\end{theorem}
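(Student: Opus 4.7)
The plan is to prove the two rates in the maximum separately, since $\max\{a,b\}\geq a$ and $\max\{a,b\}\geq b$; each term is handled by a different hard subfamily. For the term $n^{-2\beta/(2\beta+D)}$ the argument is immediate: take $M=1$ and $R_1=I^D$, which is a basis piece via the identity embedding of a scaled ball, so $H^\beta(I^D)\subset \mF_{M,J,\alpha,\beta}$. The minimax risk over $\mF_{M,J,\alpha,\beta}$ therefore dominates the classical minimax risk over the H\"older ball $H^\beta(I^D)$ under Gaussian regression with bounded covariate density, which is known to be bounded below by $n^{-2\beta/(2\beta+D)}$ (standard Stone--Tsybakov lower bound; see Tsybakov or Gin\'e--Nickl). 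No new work is needed here.

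For the boundary term $n^{-\alpha/(\alpha+D-1)}$, I would apply Assouad's lemma to a family of indicator functions whose boundaries are locally perturbed horizon graphs. Fix a reference basis piece $A_0$ strictly inside $I^D$ (e.g.\ a ball of radius $1/4$ centered at $(1/2,\dots,1/2)$) whose boundary contains, in an adapted coordinate chart, a flat patch $U\times\{0\}$ with $U\subset I^{D-1}$ of constant volume. Pick a $C^\infty$ mother bump $\psi:\R^{D-1}\to\R$ supported in $[0,1]^{D-1}$ with $\int \psi^2>0$, and pack $N\asymp \epsilon^{-(D-1)}$ disjoint scaled copies $\psi_{i,\epsilon}(y):=\epsilon^\alpha \psi(\epsilon^{-1}(y-y_i))$ inside $U$. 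For $\sigma\in\{0,1\}^N$ define $h_\sigma:=\sum_{i=1}^N \sigma_i\,\psi_{i,\epsilon}$; by disjointness and scaling $\|h_\sigma\|_{H^\alpha(I^{D-1})}=O(1)$ uniformly in $\sigma$ and small $\epsilon$. Deform the flat portion of $\partial A_0$ by the graph of $h_\sigma$ to obtain $R_\sigma\in \mR_{\alpha,1}$, set $f_\sigma:=\mone_{R_\sigma}\in \mF_{M,J,\alpha,\beta}$, and observe that by disjoint supports
\begin{align*}
    \|f_\sigma-f_{\sigma'}\|_{L^2(P_X)}^2 \asymp \epsilon^{\alpha+D-1}\,\mathrm{Ham}(\sigma,\sigma').
\end{align*}
The KL divergence between the regression laws for $f_\sigma$ and $f_{\sigma'}$ equals $\tfrac{n}{2\sigma^2}\|f_\sigma-f_{\sigma'}\|_{L^2(P_X)}^2$, so a single bit flip contributes KL of order $n\epsilon^{\alpha+D-1}$. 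Choosing $\epsilon:=c\,n^{-1/(\alpha+D-1)}$ with $c$ small enough, Assouad's lemma yields
\begin{align*}
    \inf_{\bar f}\sup_\sigma \Ep_{f_\sigma}\bigl[\|\bar f-f_\sigma\|_{L^2(P_X)}^2\bigr] \gtrsim N\cdot \epsilon^{\alpha+D-1}=\epsilon^\alpha\asymp n^{-\alpha/(\alpha+D-1)},
\end{align*}
completing the second half of the bound.

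The main obstacle will be the geometric bookkeeping required to certify $R_\sigma\in \mR_{\alpha,1}$ under the paper's transformed-sphere restriction on basis pieces: one must produce, uniformly in $\sigma$, an $\alpha$-smooth embedding $e_\sigma$ of the closed unit ball in $\R^D$ whose image intersected with $I^D$ is $R_\sigma$, with H\"older norms controlled independently of $\sigma$. I would work in a tubular neighborhood of the flat patch of $\partial A_0$, write the embedding of $A_0$ in normal--tangential coordinates $(y,t)$ so that $\partial A_0$ locally reads $\{t=0\}$, and compose with the shear $(y,t)\mapsto(y,\,t+\chi(t)h_\sigma(y))$ for a smooth cutoff $\chi$ equal to $1$ near $t=0$ and vanishing outside the tubular neighborhood. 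The $H^\alpha$ norm of the composite is then controlled by that of $h_\sigma$, which we already bounded uniformly. Keeping the perturbation strictly inside $I^D$ (easily arranged by choosing $A_0$ in the interior and $\epsilon$ small) removes boundary issues so that $I^D\cap\mathrm{Image}(e_\sigma)=R_\sigma$. Once this construction is in place, the Assouad computation is mechanical and the theorem follows.
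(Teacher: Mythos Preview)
Your proposal is correct and takes a genuinely different route from the paper. The paper proves the bound via the Yang--Barron/Fano scheme: it reduces to testing over a $\delta$-packing of $\mF_{M,J,\alpha,\beta}$, upper-bounds the mutual information by covering numbers of $H^\beta(I^D)$ and of $\mR_{\alpha,J}$ (the latter via Dudley's $d_1$-entropy for boundary fragments together with a product lemma reducing $\mR_{\alpha,J}$ to $\mR_{\alpha,1}$), lower-bounds the packing numbers for the same two classes (again via Dudley), and balances $\delta,\delta'$ to obtain both rates simultaneously. You instead split the maximum and handle each term by its natural hard subfamily: the $n^{-2\beta/(2\beta+D)}$ part by the inclusion $H^\beta\subset\mF_{M,J,\alpha,\beta}$ and the classical H\"older lower bound, and the $n^{-\alpha/(\alpha+D-1)}$ part by an Assouad hypercube of bump-perturbed indicators with the standard volume/KL arithmetic. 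Your argument is more constructive and avoids the two-sided entropy bookkeeping; the paper's is more uniform (one reduction covers both terms) and leans on existing metric-entropy estimates rather than building the hard functions by hand.

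One small correction: your example $A_0$ (a ball of radius $1/4$ strictly inside $I^D$) is not a basis piece under the paper's definition, since condition (ii) in the appendix requires $\mone_{A_0}$ to be a single horizon function $\mone_{\{x_d\ge h(\cdot)\}}$, which fails for any region compactly contained in $I^D$. Take instead $A_0=\{x\in I^D:x_D\ge 1/2\}$, realized as $I^D\cap\mathrm{Image}(e)$ for an embedding $e$ whose boundary is the hyperplane $\{x_D=1/2\}$ inside $I^D$ and closes up smoothly outside; its boundary is already flat, the bumps $h_\sigma$ perturb it to $\{x_D\ge 1/2+h_\sigma(x_1,\dots,x_{D-1})\}$, and both conditions (i) and (ii) are then immediate with uniform $H^\alpha$ control. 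With that adjustment your Assouad computation goes through exactly as written. The paper's own invocation of Dudley's packing lower bound for $\mR_{\alpha,1}$ rests on essentially the same perturbed-hypograph construction implicitly, so you are not assuming anything the paper does not.
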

Proof of Theorem \ref{thm:minimax} employs techniques in  the minimax theory developed by \cite{yang1999information} and \cite{raskutti2012minimax}.

We show that the rate of convergence by the estimators with DNNs are optimal in the minimax sense, since the rates in Theorems \ref{thm:non-bayes} and \ref{thm:bayes} correspond to the lower bound of Theorem \ref{thm:minimax} up to a log factor.
In other words, for estimating $f^* \in \mF_{M, J,\alpha,\beta}$, no other methods could achieve a better rate than the estimators by DNNs.

\subsection{Non-Optimality of Other Methods}

We discuss non-optimality of some of other standard methods to estimate piecewise smooth functions.
To this end, we consider a class of \textit{linear estimators}.
The class contains any estimators with the following formulation:
\begin{align}
    \hat{f}^{\mathrm{lin}} (x) = \sum_{i \in [n]} \Psi_i(x; X_1,...,X_n) Y_i, \label{def:lin}
\end{align}
where $\Psi_i$ is an arbitrary function which depends on $X_1,...,X_n$.
Various estimators are regarded as linear estimators, for examples, kernel methods, Fourier estimators, splines, Gaussian process, and others.

A study for nonparametric statistics (Section 6 in \cite{korostelev2012minimax}) proves inefficiency of  linear estimators with non-smooth functions.
Based on the result, the following corollary holds:
\begin{corollary}
    Linear estimators \eqref{def:lin} do not attain the optimal rate in Theorem \ref{thm:minimax} for $\mF_{M,J,\alpha,\beta}$.\\
   Hence, there exist $f^* \in \mF_{M,J,\alpha,\beta}$ such that $\hat{f} \in \{\hat{f}^L,\hat{f}^B\}$ and any $\hat{f}^{\mathrm{lin}}$, large $n$ provides
    \begin{align*}
        \Ep_{f^*} \left[ \|\hat{f} - f^*\|_{L^2(P_X)}^2\right] < \Ep_{f^*} \left[ \|\hat{f}^{\mathrm{lin}} - f^*\|_{L^2(P_X)}^2\right].
    \end{align*}
\end{corollary}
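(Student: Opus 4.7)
The plan is to reduce the claim to a classical lower bound for linear estimators on a class of functions with jump discontinuities (the result in Section~6 of \cite{korostelev2012minimax}), and then combine that lower bound with the upper bounds already established by Theorem~\ref{thm:non-bayes} and Corollary~\ref{cor:bayes}. The overall logic is: (a) pick a subclass of $\mF_{M,J,\alpha,\beta}$ on which the cited lower bound applies verbatim; (b) read off from that lower bound the best polynomial rate achievable by any $\hat{f}^{\mathrm{lin}}$ of the form \eqref{def:lin}; (c) observe that this rate is strictly slower than the DNN upper rate $\max\{n^{-2\beta/(2\beta+D)},n^{-\alpha/(\alpha+D-1)}\}(\log n)^2$.

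For step~(a), I would take $M=2$, $J=1$, and fix a basis piece $R\in\mR_{\alpha,1}$ with $\alpha$-smooth boundary surface; on $R$ and $I^D\setminus R$ place constant (hence trivially $H^\beta$) functions with distinct values, so that the resulting $f^* = c_1 \mone_R + c_2 \mone_{I^D\setminus R}$ lies in $\mF_{M,J,\alpha,\beta}$. Estimating such an $f^*$ from the noisy samples \eqref{eq:reg} is equivalent to estimating the set $R$ in Hausdorff-type loss, which is exactly the boundary-fragment problem treated in \cite{korostelev2012minimax, mammen1995asymptotical}. For step~(b), the cited result states that any linear estimator of the form \eqref{def:lin} satisfies a risk lower bound of polynomial order strictly larger than $n^{-\alpha/(\alpha+D-1)}$ on this subclass, because a linear filter cannot adapt to the location of the discontinuity surface.

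For step~(c), compare this linear lower bound with the upper bound
\[
    \Ep_{f^*}\bigl[\|\hat{f} - f^*\|_{L^2(P_X)}^2\bigr] \;\leq\; C\,\max\{n^{-2\beta/(2\beta+D)},\, n^{-\alpha/(\alpha+D-1)}\}(\log n)^2,
\]
valid for $\hat{f}\in\{\hat{f}^L,\hat{f}^B\}$ by Theorem~\ref{thm:non-bayes} and Corollary~\ref{cor:bayes}. Since the two polynomial rates differ, for all sufficiently large $n$ the DNN bound is strictly smaller than the linear lower bound, which simultaneously proves both assertions of the corollary: the worst-case rate of linear estimators on $\mF_{M,J,\alpha,\beta}$ fails to match the minimax rate of Theorem~\ref{thm:minimax}, and one may exhibit an explicit $f^*$ (any element of the subclass attaining the linear lower bound) on which $\hat{f}^L$ or $\hat{f}^B$ beats every $\hat{f}^{\mathrm{lin}}$.

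The main obstacle I expect is the faithful embedding in step~(a): one must check that the specific indicator functions used in \cite{korostelev2012minimax} to derive the linear-estimator lower bound can be realised as elements of $\mF_{M,J,\alpha,\beta}$ under our definition of a basis piece, which requires an $\alpha$-smooth embedding of the unit ball whose image realises $R$. This is essentially a geometric verification but has to be done with some care when $\alpha$ is non-integer or the boundary touches $\partial I^D$. A secondary technical point is extracting a single worst-case $f^*$ from the worst-case inequality, which is routine (e.g.\ taking a sequence along which the linear lower bound is approached and using compactness of the relevant parameter space, or using the standard two-point argument implicit in the cited lower-bound construction).
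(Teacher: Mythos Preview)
Your proposal is correct and follows the same approach as the paper: the paper does not give a standalone proof of this corollary but simply invokes Section~6 of \cite{korostelev2012minimax} for the linear-estimator lower bound on boundary-fragment indicators (which embed into $\mF_{M,J,\alpha,\beta}$ via the definition of basis pieces) and then compares with the DNN upper bounds from Theorem~\ref{thm:non-bayes} and Corollary~\ref{cor:bayes}. Your write-up is in fact more detailed than what the paper provides, and the two technical caveats you flag (verifying the embedding into $\mR_{\alpha,J}$ and extracting a single $f^*$) are genuine but routine, exactly as you say.
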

This result shows that a wide range of the other methods has larger generalization errors, hence the estimators by DNNs can overcome the other methods.
Some specific methods are analyzed in the supplementary material.

\subsection{Interpretation and Discussion}

According to the results, we can see that the estimators by DNNs have the theoretical advantage than the others for estimating $f^* \in \mF_{M, J,\alpha,\beta}$, since the estimators by DNNs achieve the optimal convergence rate of generalization errors and the others do not.

We provide an intuition on why DNNs are optimal and the others are not. 
The most notable fact is that DNNs can approximate non-smooth functions with a small number of parameters, due to activation functions and multi-layer structures.
A combination of two ReLU functions can approximate step functions, and a composition of the step functions in a combination of other parts of the network can easily express smooth functions restricted to pieces.
In contrast, even though the other methods have the universal approximation property, they require a larger number of parameters to approximate non-smooth structures.
By the statistical theory, a larger number of parameters increases the variance of estimators and worsens the performance, hence the other methods lose the optimality.

About the inefficiency of the other methods, we do not claim that every statistical method except DNNs misses the optimality for estimating piecewise smooth functions.
Our argument is the advantage of DNNs against linear estimators. 

Several studies investigate approximation and estimation for non-smooth structures.
Harmonic analysis provides several methods for non-smooth structures, such as curvelets \cite{candes2002recovering,candes2004new} and shearlets \cite{kutyniok2011compactly}.
While the studies provide an optimality for piecewise smooth functions on pieces with $C^2$ boundaries, pieces in the boundary fragment class considered in our study is more general and the harmonic-based methods cannot be optimal with the pieces (studied in \cite{korostelev2012minimax}).
Also, a convergence rate of generalization error is not known for these methods.
Studies from nonparametric statistics investigated non-smooth estimation \cite{van1985mean, wu1993kernel, wu1993nonparametric, wolpert2011stochastic,imaizumi2018statistically}.
These works focus on different settings such as density estimation or univariate data analysis, hence their setting does not fit problems discussed here.

\section{Experiments}


\subsection{Non-smooth Realization by DNNs} \label{sec:exp1}

We show how the estimators by DNNs can estimate non-smooth functions.
To this end, we consider the following data generating process with a piecewise linear function.
Let $D=2$, $\xi$ be an independent Gaussian variable with a scale $\sigma=0.5$, and $X$ be a uniform random variable on $I^2$.
Then, we generate $n$ pairs of $(X,Y)$ from \eqref{eq:reg} with a true function $f^*$ as piecewise smooth function such that
\begin{align}
    f^{*}(x) &= \mone_{R_1}(x)(0.2 + x_1^2 + 0.1 x_2) + \mone_{R_2}(x)(0.7 + 0.01  |4 x_1 + 10 x_2 - 9|^{1.5}), \label{def:exp_true}
\end{align}
with a set $R_1 = \{(x_1,x_2) \in I^2: x_2 \geq -0.6 x_1 + 0.75\}$ and $R_1 = I^2 \backslash R_1$.
A plot of $f$ in figure \ref{fig:true_f} shows its non-smooth structure.

\begin{center}
\begin{minipage}{0.4\hsize}
\begin{figure}[H]
\includegraphics[width=\hsize]{./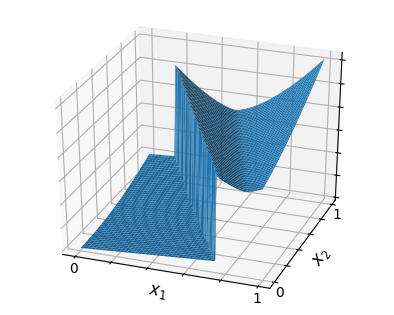}
\caption{A plot for $f^{*}(x_1,x_2)$ with $(x_1,x_2) \in I^2$. \label{fig:true_f}}
\end{figure}
\end{minipage}
\begin{minipage}{0.4\hsize}
\begin{figure}[H]
\includegraphics[width=\hsize]{./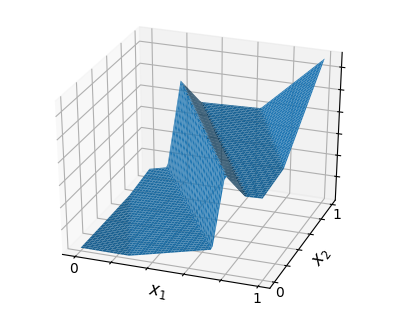}
\caption{A plot for the estimator $\hat{f}^{L}$. \label{fig:est_f}}
\end{figure}
\end{minipage}
\end{center}

\begin{center}
\begin{minipage}{0.52\hsize}
\begin{figure}[H]
\begin{center}
\includegraphics[width=0.8\hsize]{./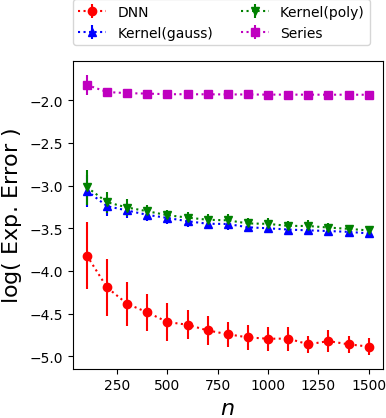}
\caption{Comparison of errors by the methods. \label{fig:compare}}
\end{center}
\end{figure}
\end{minipage}
\end{center}

About the estimation by DNNs, we employ the least square estimator \eqref{opt:erm}.
For the architecture $\Theta$ of DNNs, we set $|\Theta|=4$ and dimensionality of each of the layers as $D_1=2,D_\ell=3$ for $\ell \in \{2,3,4\}$, and $D_{5}=1$.
We use a ReLU activation.
To mitigate an effect of the non-convex optimization problem, we employ $100$ initial points which are generated from the Gaussian distribution with an adjusted mean.
We employ Adam \cite{kingma14adam} for optimization.

We generate data with a sample size $n=100$ f and obtain the least square estimator $\hat{f}^{L}$ for $f^{*}$.
Then, we plot $\hat{f}^{L}$ in Figure \ref{fig:est_f} which minimize an error from the $100$ trials with different initial points.
We can observe that $\hat{f}^{L}$ succeeds in approximating the non-smooth structure of $f^{*}$.

\subsection{Comparison with the Other Methods}

We compare performances of the estimator by DNNs, the orthogonal series method, and the kernel methods.
About the estimator by DNNs, we inherit the setting in Section \ref{sec:exp1}.
About the kernel methods, we employ estimators by the Gaussian kernel and the polynomial kernel.
A bandwidth of the Gaussian kernel is selected from $\{0.01, 0.1, 0.2,...,2.0\}$ and a degree of the polynomial kernel is selected from $[5]$.
Regularization coefficients of the estimators are selected from $\{0.01,0.4,0.8,...,2.0\}$.
About the orthogonal series method, we employ the trigonometric basis which is a variation of the Fourier basis.
All of the parameters are selected by a cross-validation.

We generate data from the process \eqref{eq:reg} with \eqref{def:exp_true} with a sample size $n \in \{100,200,...,1500\}$ and measure the expected loss of the methods.
In figure \ref{fig:compare}, we report a mean and standard deviation of a logarithm of the loss by $100$ replications.
By the result, the estimator by DNNs always outperforms the other estimators.
The other methods cannot estimate the non-smooth structure of $f^{*}$, although some of the other methods have the universal approximation property.

\section{Conclusion}

In this paper, we have derived theoretical results that explain why DNNs outperform other methods.
To this goal, we considered a regression problem under the situation where the true function is piecewise smooth.  We focused on the least square and Bayes estimators, and derived a convergence rate of generalization errors by the estimators.  Notably, we showed that the rates are optimal in the minimax sense.
Furthermore, we show that the commonly used linear estimators are inefficient to estimate piecewise smooth functions, hence we show that the estimators by DNNs work better than the other methods for non-smooth functions.
We also provided a guideline for selecting a number of layers and parameters of DNNs based on the theoretical results.


\section*{Acknowledgement}

We have greatly benefited from insightful comments and suggestions by Alexandre Tsybakov, Taiji Suzuki, Bharath K Sriperumbudur, Johannes Schmidt-Hieber, and Motonobu Kanagawa.

\bibliographystyle{plain}

\bibliography{./bib_master}

\onecolumn
\allowdisplaybreaks

\appendix

\begin{center}
    {\Large {\bf 
        Supplementary materials for \\
        ``Deep Neural Network Learn \\
        Non-Smooth Functions Effectively.''
    }}
\end{center}

\section{Additional definitions} \label{sec:holder}

\textbf{The H\"older Space}

Let $\Omega$ be an open subset of $\R^D$ and $\beta > 0$ a constant.  The H\"older space $H^\beta(\Bar{\Omega})$, where $\bar{\Omega}$ is the closure of $\Omega$, is the set of functions $f:\bar{\Omega}\to\R$ such that 
$f$ is continuously differentiable on $\bar{\Omega}$ up to the order $\lfloor\beta\rfloor$, and the $\lfloor\beta\rfloor$-the derivatives of $f$ are H\"older continuous with exponent ${\beta - \lfloor \beta \rfloor}$, namely, 
\[
\sup_{x,x' \in \bar{\Omega}, x \neq x'} \frac{|\partial^a f(x) - \partial^a f(x')|}{|x-x'|^{\beta - \lfloor \beta \rfloor}} <\infty
\]
for any multi-index $a$ with $|a|=\lfloor\beta\rfloor$, where $\partial^a$ denotes a partial derivative.. 
The norm of the H\"older space is defined by 
\begin{align*}
    \|f\|_{H^{\beta}} :=& \max_{|a| \leq \lfloor \beta \rfloor} \sup_{x \in \Omega} |\partial^a f(x)| + \max_{|a| = \lfloor \beta \rfloor} \sup_{x,x' \in \Omega, x \neq x'} \frac{|\partial^a f(x) - \partial^a f(x')|}{|x-x'|^{\beta - \lfloor \beta \rfloor}}.
\end{align*}

\textbf{Basis Pieces defined by Continuous Embeddings}

We redefine a piece as an intersection of $J$ embeddings of $D$-dimensional balls.  
we first introduce an extended notion of a \textit{boundary fragment class} which is developed by \cite{dudley1974metric} and \cite{mammen1999smooth}.

Preliminarily, let $\mS^{D-1} := \{x \in \R^D : \|x\|_2=1\}$ is the $D-1$ dimensional sphere, and let $(V_j,F_j)_{j=1}^\ell$ be its coordinate system as a $C^\infty$-differentiable manifold such that $F_j: V_j\to \mathring{B}^{D-1}:=\{x\in \R^{D-1}\mid \|x\|<1\}$ is a diffeomorphism.  A function $g:\mS^{D-1}\to\R$ is said to be in the H\"older class $H^\alpha(\mS^{D-1})$ with $\alpha>0$ if $g\circ F_j^{-1}$ is in $H^\alpha(\mathring{B})$. 

Let $B^D=\{x\in \R^D\mid \|x\|\leq 1\}$.  
A subset $R\subset I^D$ is called a {\em basic piece} if it satisfies two conditions: (i) there is a continuous embedding $g:B^D\to \R^D$ such that its restriction to the boundary $\mS^{D-1}$ is in $H^\alpha(\mS^{D-1})$  and $R=I^D\cap \mathrm{Image}(g)$, (ii) there is $1\leq i\leq D$ and  $h\in H^\alpha(I^{D-1})$ such that the indicator function of $R$ is given by the graph
\[
    1_{R} = \Psi(x_1,\ldots,x_{i-1}, x_i +  h(x_1,\ldots,\check{x}_i,\ldots,x_D),x_{i+1},...,x_D),
\]
where $\Psi$ is the Heaviside function. 
The condition (i) tells that a basic piece belongs to the \textit{boundary fragment class} which is developed by \cite{dudley1974metric} and \cite{mammen1999smooth}, while (ii) means $R$ is a set defined by a horizon function discussed in \cite{petersen2017optimal}.  

\section{Proof of Theorem \ref{thm:non-bayes}}

We first provide additional notations.
$\lambda$ denotes the Lebesgue measure.
For a function $f:I^D \to R$, $\|f\|_{L^\infty} = \sup_{x \in I^D}|f(x)|$ is a supremum norm.
$\|f\|_{L^2} := \|f\|_{L^2(I^D;\lambda)}$ is an abbreviation for $L^2(I^d;\lambda)$-norm.

Given a set of observations $\{X_1,\ldots,X_n\}$, let $\|\cdot\|_n$ be an empirical norm defined by 
\begin{align*}
    \|f\|_n^2 = n^{-1} \sum_{i=1}^n f(X_i)^2.
\end{align*}
The empirical norm of a random variable is also defined by  
\begin{align*}
    \|Y\|_n := \left(  n^{-1} \sum_{i \in [n]} Y_i^2\right)^{1/2} \mbox{~and~~}\|\xi\|_n := \left(  n^{-1} \sum_{i \in [n]} \xi_i^2\right)^{1/2}.
\end{align*}
The empirical norms are in fact seminorms, which do not satisfy the strong positivity. 

Let $\mF$ be a vector space with a norm $\|\cdot\|$.  For $\epsilon > 0$, the covering number of $\mF$ with $\|\cdot\|$ for radius $\delta$ is defined by 
\begin{align*}
    \mN(\epsilon, \mF, \|\cdot\|) := \inf\Bigl\{ N \mid \text{there is }\{f_j\}_{j \in [N]}\subset \mF \text{ such that } \|f - f_j\| \leq \epsilon, \forall f \in \mF \Bigr\}.
\end{align*}

By the definition of the least square estimator \eqref{opt:erm}, we obtain the following basic inequality 
\begin{align*}
    \|Y - \hat{f}^{L}\|_n^2 \leq \|Y - f\|_n^2
\end{align*}
for all $f \in \mF_{NN,\eta}(S,B,L)$.
It follows from $Y_i = f^*(X_i) + \xi_i$ that
\begin{align*}
    \|f^* +\xi - \hat{f}^{L}\|_n^2 \leq \|f^* + \xi - f\|_n^2.
\end{align*}
A simple calculation yields
\begin{align}
    \|f^* - \hat{f}^{L}\|_n^2 \leq \|f^* - f\|_n^2 + \frac{2}{n}\sum_{i=1}^n \xi_i (\hat{f}^{L}(X_i) - f(X_i)). \label{ineq:basic2}
\end{align}

In the following, we will fix $f \in \mF_{NN,\eta}(S,B,L)$ and evaluate each of the three terms in the RHS of \eqref{ineq:basic2}. 
In the first subsection, we provide a result for approximating $f^* \in \mF_{M,J,\alpha,\beta}$ by DNNs.
In the second subsection, we evaluate the variance of $\hat{f}^F$.
In the last subsection, we combine the results and derive an overall rate. 

\subsection{Approximate piecewise functions by DNNs} \label{sec:approx}

The purpose of this part is to bound the following error
\begin{align*}
    \|f - f^*\|_{L^2(P_X)}
\end{align*}
for properly selected $f \in \mF_{NN,\eta}(S,B,L)$.
To this end, we consider an existing $\Theta$ with properly selected $S, B$ and $L$.
Our proof is obtained by extending techniques by \cite{yarotsky2017error} and \cite{petersen2017optimal}.

Fix $f^* \in \mF_{M,J,\alpha,\beta}$ such that $f^* = \sum_{m \in [M]} f^*_m \mone_{R^*_m}$ with $f^*_m\in H^\beta$ and $R^*_m\in \mathcal{R}_{\alpha,J}$ for $m \in [M]$.
To approximate $f^*$, we introduce neural networks $\Theta_{f,m}$ and $\Theta_{r,m}$ for each $m \in [M]$, where the  number of layers $L$ and non-zero parameters $S$ will be specified later.

For approximation, we introduce some specific architectures of DNNs as building blocks.
The DNN $\Theta_+ := (A,b) = ((1,...,1)^\top,0)$ works as {\em summation}: $G_\eta[\Theta_+](x_1,...,x_D) = \sum_{d \in [D]} x_d$, and the DNN $\Theta_\times$ plays a role for multiplication: $G_\eta[\Theta_\times](x_1,...,x_D) \approx \prod_{d \in [D]}x_d$.
A network $\Theta_3$ approximates the inner product, i.e., $G_{\eta}[\Theta_3](x_1,...,x_M,x'_1,...,x'_M) \approx \sum_{m \in [M]} x_m x'_m$.
The existence and their approximation errors of $G_\eta[\Theta_\times]$ and $G_\eta[\Theta_3]$ will be shown in Lemma \ref{lem:multi_multi} and \ref{lem:sum_multi}.

We construct a network given by $G_{\eta}[\Theta_3](G_{\eta}[\Theta_1](\cdot), G_{\eta}[\Theta_2](\cdot))$, where $\Theta_1$ and $\Theta_2$ consist of $M$-dimensional outputs $(\Theta_{f,1},\ldots,\Theta_{f,M})$ and $(\Theta_{r,1},\ldots,\Theta_{r,M})$, respectively.  
We evaluate the distance between $f^*$ and the combined neural network :
\begin{align}
    &\|f^* - G_{\eta}[\Theta_3](G_{\eta}[\Theta_{f,1}](\cdot),...,G_{\eta}[\Theta_{f,M}](\cdot), G_{\eta}[\Theta_{r,1}](\cdot),...,G_{\eta}[\Theta_{r,M}](\cdot))\|_{L^2} \notag \\
    &= \left\| \sum_{m \in [M] }f^*_m \mone_{R^*_m} - G_{\eta}[\Theta_3](G_{\eta}[\Theta_{f,1}](\cdot),...,G_{\eta}[\Theta_{f,M}](\cdot), G_{\eta}[\Theta_{r,1}](\cdot),...,G_{\eta}[\Theta_{r,M}](\cdot))\right\|_{L^2} \notag \\
    & \leq  \left\| \sum_{m \in [M] }f^*_m \otimes \mone_{R^*_m} - \sum_{m \in [M] }G_{\eta}[\Theta_{f,m}] \otimes  G_{\eta}[\Theta_{r,m}]\right\|_{L^2} \notag \\
    & \quad +  \left\| \sum_{m \in [M] }G_{\eta}[\Theta_{f,m}] \otimes  G_{\eta}[\Theta_{r,m}] -  G_{\eta}[\Theta_3](G_{\eta}[\Theta_{f,1}](\cdot),...,G_{\eta}[\Theta_{f,M}](\cdot), G_{\eta}[\Theta_{r,1}](\cdot),...,G_{\eta}[\Theta_{r,M}](\cdot))\right\|_{L^2}\notag \\
    & \leq \sum_{m \in [M] } \left\| f^*_m \otimes \mone_{R^*_m} - G_{\eta}[\Theta_{f,m}] \otimes  G_{\eta}[\Theta_{r,m}]\right\|_{L^2} \notag \\
    &\quad +  \left\| \sum_{m \in [M] }G_{\eta}[\Theta_{f,m}] \otimes  G_{\eta}[\Theta_{r,m}] -  G_{\eta}[\Theta_3](G_{\eta}[\Theta_{f,1}](\cdot),...,G_{\eta}[\Theta_{f,M}](\cdot), G_{\eta}[\Theta_{r,1}](\cdot),...,G_{\eta}[\Theta_{r,M}](\cdot))\right\|_{L^2} \notag \\
    & \leq \sum_{m \in [M] } \left\| ( f^*_m -  G_{\eta}[\Theta_{f,m}]) \otimes   G_{\eta}[\Theta_{r,m}]  \right\|_{L^2} +\sum_{m \in [M] }  \left\| f_m^*\otimes  (\mone_{R^*_m}  - G_{\eta}[\Theta_{r,m}])\right\|_{L^2} \notag \\
    &\quad +  \left\| \sum_{m \in [M] }G_{\eta}[\Theta_{f,m}] \otimes  G_{\eta}[\Theta_{r,m}] -  G_{\eta}[\Theta_3](G_{\eta}[\Theta_{f,1}](\cdot),...,G_{\eta}[\Theta_{f,M}](\cdot), G_{\eta}[\Theta_{r,1}](\cdot),...,G_{\eta}[\Theta_{r,M}](\cdot))\right\|_{L^2} \notag \\
    & =: \sum_{m \in [M]} B_{1,m} + \sum_{m \in [M]} B_{2,m} + B_3. \label{ineq:decomp1}
\end{align}
We will bound $B_{m,1},B_{m,2}$ for $m \in [M]$ and $B_3$.

\paragraph{Bound of $B_{1,m}$.}
The H\"older inequality gives 
\begin{align*}
    &\left\| ( f^*_m -  G_{\eta}[\Theta_{f,m}]) \otimes  \mone_{R^*_m}  \right\|_{L^2} \leq \left\| f^*_m -  G_{\eta}[\Theta_{f,m}] \right\|_{L^2}\left\|   G_{\eta}[\Theta_{r,m}] \right\|_{L^\infty}.
\end{align*}
Theorem 1 in \cite{yarotsky2017error} and Theorem A.9 in \cite{petersen2017optimal} guarantee that there exists a neural network $\Theta_{f,m}$ such that $|\Theta_{f,m}|\leq c_1'(1+log_2\lceil(1+\beta)\rceil\cdot (1+\beta/D)$, $\|\Theta_{f,m}\|_0 \leq C'_1 \epsilon^{-D/\beta}$0, $\|\Theta_{f,m}\|_{\infty} \leq \epsilon^{-2s_1}$,and $\left\| f^*_m -  G_{\eta}[\Theta_{f,m}] \right\|_{L^2} < \epsilon$, where $c_1,c_1',s_1>0$ are constants depending only on $f^*$.  
The neural network $ \Theta_{r,m}$ is given by Lemma 3.4 in \cite{petersen2017optimal}, for which  $\|G_{\eta}[\Theta_{r,m}]\|_{L^\infty} \leq 1$.
Combining these results, we obtain
\begin{align*}
    B_{1,m} < \epsilon.
\end{align*}

\paragraph{Bound of $B_{2,m}$.} We have
\begin{align*}
    &\left\| f_m^*\otimes  (\mone_{R^*_m}  - G_{\eta}[\Theta_{r,m}]) \right\|_{L^2} \leq \left\|   f_m^*\right\|_{L^\infty}\left\| \mone_{R^*_m}  - G_{\eta}[\Theta_{r,m}] \right\|_{L^2}.
\end{align*}
From $f_m^* \in H^\beta(I^D)$, there exists a constant $C_H > 0$ such that $\|f_m^*\|_{L^2} \leq C_H$.

Recall that each $R_m^*\in \mathcal{R}_{\alpha,J}$ takes the form $R_m^*=\cap_{j=1}^J R_m^j$ with $R_m^j\in \mathcal{R}_{\alpha,1}$ for some $B$, and thus $1_{R_m^j}\in \mathcal{HF}_{\alpha,D,B}$ defined in \cite{petersen2017optimal}.
Then, from Lemma 3.4 in \cite{petersen2017optimal}, there are some constants $c'$, $c$, and $s>0$ depending on $\alpha$, $D$, and $B$ such that for any $\varepsilon>0$ a neural network $\Theta_{m,j}$ can be found with 
\[
	\Vert 1_{R_{m}^{j}}-G_\eta[\Theta_{m,j}]\Vert_{L^2}\leq \varepsilon, 
\]
$|\Theta_{m,j}|\leq c'_2 (1+\alpha/D) \log(2+\alpha)$, $\Vert \Theta_{m,j}\Vert_0 \leq c_2 \varepsilon^{-2(D-1)/\alpha}$, and $\Vert \Theta_{m,j} \Vert_\infty \leq \varepsilon^{-2s_2}$.  Note that $c_2,c_2',s_2$ depend only on $f^*$ for our purpose. 

Define a neural network $\Theta_{r,m}$ by 
\[
G[\Theta_{r,m}]:=G[\Theta_{\times,J}](G_\eta[\Theta_{m,1}](\cdot),\ldots,G_\eta[\Theta_{m,J}](\cdot)),
\]
where $\Theta_{\times,J}$ is given in Lemma \ref{lem:multi_multi} below.  It follows that 
\begin{align}\label{eq:B2}
& \left\| \mone_{R^*_m}  - G_{\eta}[\Theta_{r,m}] \right\|_{L^2} \notag \\
& \leq \Vert \mone_{R^*_m} - \bigotimes_{j \in [J]} G_\eta[\Theta_{m,j}]\Vert_{L^2} +
\Vert \bigotimes_{j \in [J]} G_\eta[\Theta_{m,j}] - G_{\eta}[\Theta_{r,m}]\Vert_{L^2}
\end{align}
The first term of the last line of \eqref{eq:B2} is bounded by 
\begin{align*}
    &\Vert \bigotimes_{j \in [J]}1_{R_{m,j}} -  \bigotimes_{j \in [J]}G_\eta[\Theta_{m,j}]\Vert_{L^2} \\
    & =\sum_{j\in[J]} \| 1_{R_{m,j}} -  G_\eta[\Theta_{m,j}]\|_{L^2}  \prod_{j'=1 }^{j} \|1_{R_{m,j'}}\|_{L^2} \prod_{j''\in [J] \backslash [j]}\| G_{\eta}[\Theta_{m,j''}]\Vert_{L^2}\\
    & \leq \sum_{j \in[J]} \Vert 1_{R_{m,j}}  -  G_{\eta}[\Theta_{m,j}]\Vert_{L^2},
\end{align*}
where $\Vert G_{\eta}[\Theta_{r,m}] \Vert_\infty\leq 1$ is used in the last line.  From Lemma \ref{lem:multi_multi}, the second term of \eqref{eq:B2} is upper bounded by $(J-1)\varepsilon$.  
We finally obtain 
\[
    B_{2,m} := \left\| f_m^*\otimes  (\mone_{R^*_m}  - G_{\eta}[\Theta_{r,m}])\right\|_{L^2} 
\leq C_H (2J-1) \varepsilon.
\]

\begin{lemma}\label{lem:multi_multi}
    Fix $\theta > 0$ arbitrary.
There are absolute constants $C_\times>0$ and $s_\times >0$ such that for any $\epsilon \in (0,1/2)$, $D'\in\mathbb{N}$ there exists a neural network $\Theta_{\times,D'}$ of $D'$-dimensional input with at most $(1 + \log_2D') / \theta$ layers, $\|\Theta_{\times,D'}\|_0 \leq C_{\times}D' \epsilon^{-\theta}$, $\|\Theta_{2}'\|_\infty \leq \epsilon^{-2s}$, and 
    \begin{align*}
        \Vert \prod_{d \in [D']}x_d - G_{\eta}[\Theta_{\times,D'}](x_1,...,x_{D'}) \Vert_{L^\infty([-1,1]^{D'})} \leq (D'-1) \epsilon.
    \end{align*}
\end{lemma}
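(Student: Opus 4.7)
The strategy is to first construct a bivariate multiplier that approximates $(x,y)\mapsto xy$ on $[-1,1]^2$, and then compose $D'-1$ copies of it along a balanced binary tree over the $D'$ inputs.

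The bivariate multiplier is built from the polarization identity $xy=\tfrac14\bigl((x+y)^2-(x-y)^2\bigr)$, so it suffices to approximate $x\mapsto x^2$ on $[-1,1]$ by a ReLU sub-network with the required depth, size, and weight bounds. Yarotsky's iterated sawtooth construction gives such an approximator of depth $m+1$ with $O(m)$ non-zero parameters and uniform error $2^{-2m-2}$. To realize the prescribed tradeoff $(1/\theta,\epsilon^{-\theta})$ for arbitrary $\theta>0$, I would truncate the Yarotsky recursion after $\lceil 1/\theta\rceil$ refinement steps and then correct the residual error by appending a single hidden layer of $O(\epsilon^{-\theta})$ piecewise-linear ReLU atoms that interpolate $x^2$ on a uniform grid fine enough to drive the error below $\epsilon$. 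The resulting squaring gadget has depth $O(1/\theta)$, at most $O(\epsilon^{-\theta})$ non-zero weights, and largest weight magnitude $\epsilon^{-2s_\times}$ for some absolute constant $s_\times>0$. Using three copies of this gadget and the polarization identity yields a bivariate multiplier $\Pi$ with $\|\Pi(x,y)-xy\|_{L^\infty([-1,1]^2)}\leq \epsilon$ at the same asymptotic cost.

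Next I would arrange the $D'$ inputs as leaves of a balanced binary tree of depth $\lceil\log_2 D'\rceil$ (padding with $1$'s when $D'$ is not a power of two), place one copy of $\Pi$ at each internal node, and insert a ReLU clipping gadget $t\mapsto\min(1,\max(-1,t))$ after each node to keep intermediate values in $[-1,1]$. Stacking these tree levels vertically, and using the standard identity pass-through $\mathrm{id}(t)=\eta(t)-\eta(-t)$ to route values across non-adjacent levels, gives an architecture $\Theta_{\times,D'}$ of total depth at most $(1+\log_2 D')/\theta$ and at most $(D'-1)$ times the cost of $\Pi$, yielding $\|\Theta_{\times,D'}\|_0\leq C_\times D'\epsilon^{-\theta}$ with weights bounded by $\epsilon^{-2s_\times}$. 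For the error, induction on tree depth using $|ab-\hat a\hat b|\leq |a-\hat a|+|b-\hat b|$ valid for arguments in $[-1,1]$, together with the single-gadget error $\epsilon$, shows that the accumulated error after combining $2^j$ inputs is at most $(2^j-1)\epsilon$, so the full tree satisfies the claimed uniform bound $(D'-1)\epsilon$.

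The main obstacle is realizing the squaring sub-network with the depth-versus-parameter-versus-weight tradeoff $\bigl(1/\theta,\,\epsilon^{-\theta},\,\epsilon^{-2s_\times}\bigr)$ simultaneously in $\theta$ and $\epsilon$: pure Yarotsky yields logarithmic depth and logarithmic width, whereas a pure single-layer piecewise-linear interpolation yields unit depth but scales like $1/\sqrt{\epsilon}$ in width. The hybrid construction above must carefully track the residual error at the truncation point of the sawtooth recursion and the number of ReLU atoms needed in the final correction layer to keep both the weight magnitudes and the parameter count within the stated bounds for every admissible $\theta$.
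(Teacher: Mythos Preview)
Your high-level architecture --- a balanced binary tree of $D'-1$ bivariate multipliers with clipping and identity pass-throughs, giving depth $\approx (1+\log_2 D')/\theta$, size $\approx D'\epsilon^{-\theta}$, and accumulated error $(D'-1)\epsilon$ --- is exactly what the paper does. The paper's proof is a two-line appeal to Proposition~3 of Yarotsky and Lemma~A.3 of Petersen--Voigtlaender for the bivariate gadget, plus the same tree.

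The gap is in your construction of the bivariate squaring gadget with the $(1/\theta,\epsilon^{-\theta})$ depth/size tradeoff. Your hybrid ``truncate Yarotsky after $\lceil 1/\theta\rceil$ refinement steps, then correct with a single layer of $O(\epsilon^{-\theta})$ piecewise-linear atoms'' does not deliver this for $\theta<1/2$. After a \emph{constant} number $m=\lceil 1/\theta\rceil$ of sawtooth iterations the residual $x^2 - \bigl(x-\sum_{k\le m}4^{-k}g_k(x)\bigr)$ still has second derivative equal to $2$ almost everywhere (you have only subtracted a piecewise-linear function), so a one-hidden-layer piecewise-linear interpolant needs grid spacing $\asymp\sqrt{\epsilon}$, i.e.\ $\Theta(\epsilon^{-1/2})$ atoms, independent of $m$. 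The truncation buys you nothing in the final layer.

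The construction that actually realizes the tradeoff (and is what Petersen--Voigtlaender's lemma encodes) distributes the Yarotsky iterations \emph{across} the $L\approx 1/\theta$ layers rather than concentrating the correction in one. Concretely, with target error $\epsilon$ one needs $m\asymp\log_2(1/\epsilon)$ total sawtooth compositions; grouping them into $L$ blocks of $m/L$ and noting that $g^{\circ j}$ is a $2^j$-piece piecewise-linear map (hence computable in one ReLU layer of width $O(2^j)$) gives per-layer width $O(2^{m/L})=O(\epsilon^{-c/L})=O(\epsilon^{-c\theta})$ and total size $O(\epsilon^{-\theta})$. You correctly flagged this step as the main obstacle; the fix is this layer-by-layer batching of iterations, not a terminal single-layer correction.
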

\begin{proof}
We employ the neural network for multiplication $\Theta_{\times,D'}$ as Proposition 3 in \cite{yarotsky2017error} and Lemma A.3 in \cite{petersen2017optimal}, and consider a tree-shaped multiplication network.     There are $D'-1$ multiplication networks and the tree has $1 + \log_2D'$ depth.
\end{proof}

\paragraph{Bound of $B_{3}$. } 
Take $\Theta_3$ as the neural network in Lemma \ref{lem:sum_multi}. Then we obtain 
\begin{align*}
    B_3 \leq M\epsilon.
\end{align*}

\begin{lemma} \label{lem:sum_multi}
Let $\theta > 0$ be arbitrary.
Then, with the constants $C_{\times},s>0$ in Lemma \ref{lem:multi_multi}, for each $\epsilon \in (0,1/2)$ and $D'\in\mathbb{N}$, there exists a neural network $\Theta_3$ for a $2D'$-dimensional input with at most $1 + L$ layers where $L>1/\theta$ and $D' +  C_\times D'  \epsilon^{-\theta}$ non-zero parameters such that $\|\Theta_3\|_\infty \leq \epsilon^{-s}$ and 
    \begin{align*}
        \left| G_{\eta}[\Theta_3](x_1,\ldots,x_{D'},X_{D'+1},\ldots,x_{2D'}) - \sum_{d \in [D']} x_{d} x_{D' + d} \right| \leq D' \epsilon.
    \end{align*}
\end{lemma}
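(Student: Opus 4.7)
The plan is to construct $\Theta_3$ by running $D'$ copies of a scalar multiplication subnetwork in parallel and then appending a single linear summation layer on top. Concretely, for each $d\in[D']$ invoke Lemma \ref{lem:multi_multi} with the two--dimensional input $(x_d, x_{D'+d})$ to obtain a subnetwork $\Theta_{\times,2}^{(d)}$ of depth $L_0\leq (1+\log_2 2)/\theta=2/\theta$, with $\|\Theta_{\times,2}^{(d)}\|_0\leq 2C_\times \epsilon^{-\theta}$, $\|\Theta_{\times,2}^{(d)}\|_\infty\leq \epsilon^{-2s_\times}$, and sup--norm error at most $\epsilon$ on $[-1,1]^2$. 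All these subnetworks have the same depth, so I can assemble them into a single DNN of depth $L_0$ whose weight matrices are block diagonal (one block per $d$), acting on disjoint neurons. The output of this block network is a $D'$--dimensional vector whose $d$--th coordinate approximates $x_d\, x_{D'+d}$.

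Next, I append one additional layer $\Theta_+=((1,\dots,1),0)$ implementing the sum of the $D'$ coordinates. This contributes one more layer and exactly $D'$ non-zero parameters (all equal to $1$), and it preserves the $\infty$--bound of the parameters. Thus the total depth is at most $1+L$ for any $L>1/\theta$ (absorbing $L_0$ into $L$), and the total number of non-zero parameters is at most $D'+C_\times D'\epsilon^{-\theta}$ after folding the constant $2$ into $C_\times$. The magnitude bound $\|\Theta_3\|_\infty\leq \epsilon^{-s}$ follows from the corresponding bound in Lemma \ref{lem:multi_multi} together with the fact that the summation layer has weights of magnitude $1$, choosing $s\geq 2s_\times$.

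For the approximation error, by the triangle inequality and the linearity of the final summation layer,
\begin{align*}
\Bigl| G_\eta[\Theta_3](x_1,\ldots,x_{2D'}) - \sum_{d\in[D']} x_d x_{D'+d} \Bigr|
&\leq \sum_{d\in[D']} \bigl| G_\eta[\Theta_{\times,2}^{(d)}](x_d,x_{D'+d}) - x_d x_{D'+d}\bigr| \\
&\leq D'\,\epsilon,
\end{align*}
which gives the claimed bound.

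The only non-routine step is verifying that the parallel composition of the $D'$ copies of $\Theta_{\times,2}^{(d)}$ followed by the linear summation can be written as a single feed-forward ReLU network of the stated depth and sparsity. This is essentially a bookkeeping exercise in the ReLU calculus: block--diagonal weight matrices suffice for the parallel stacking, and the absence of a ReLU after the final linear layer (or equivalently, the use of the identity $x=\eta(x)-\eta(-x)$ to transport signed quantities through one extra ReLU layer if strictly positive inputs to the activation are required) ensures the sum is represented exactly. I do not anticipate this step to present substantive mathematical difficulty beyond the careful construction already used in Lemma \ref{lem:multi_multi}.
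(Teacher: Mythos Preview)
Your proposal is correct and follows essentially the same construction as the paper: run $D'$ parallel copies of the two-input multiplication network from Lemma \ref{lem:multi_multi}, stack a single summation layer $\Theta_+=((1,\dots,1),0)$ on top, and bound the error by the triangle inequality to obtain $D'\epsilon$. Your treatment is in fact more explicit about the block-diagonal assembly and the parameter accounting than the paper's proof, but the underlying idea is identical.
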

\begin{proof}
Let $\Theta_3$ be a neural network defined by 
\begin{align*}
    G_{\eta}[\Theta_3](x) = G_{\eta}[\Theta_+](G_{\eta}[\Theta_\times](x_1,x_{D' + 1}),...,G_{\eta}[\Theta_\times](x_{D'},x_{2D'})),
\end{align*}
where $\Theta_\times$ is given by Lemma \ref{lem:multi_multi}, and $\Theta_+$ is the sammation network given by
\[
\Theta_+ := (A,b)=( (1,\ldots,1)\top,0).
\]
Then, we evaluate the difference as
\begin{align*}
    &\left| G_{\eta}[\Theta_3](x_1,...,x_{2D'}) - \sum_{d \in [D']} x_d x_{2d} \right| 
= \left| \sum_{d \in [D']}G_{\eta}[\Theta_\times](x_{d},x_{D' + d})  - \sum_{d \in [D']} x_d x_{D' + d} \right| \\
    &\leq \sum_{d \in [D']} \left| G_{\eta}[\Theta_\times](x_{d},x_{D' + d})  - x_d x_{D' + d} \right| \leq D' \epsilon, 
\end{align*}
where the last inequality uses Lemma \ref{lem:multi_multi}.
\end{proof}

\paragraph{Combined bound}
We combine the results about $B_{1,m},B_{2,m}$ and $B_3$, then define $\dot{f} \in \mF_{NN,\eta}(S,B,L)$ for approximating $f^*$.  

For $\Theta_1$
\[
|\Theta_1|\leq c_1'\bigl(1+\lceil\log_2(1+\beta)\rceil\cdot(1+\beta/D)\bigr), \quad
\vert\Theta_1\vert_0 \leq  Mc_1\varepsilon_1^{-D/\beta},\quad 
\Vert\Theta_1\Vert_\infty \leq \varepsilon^{-2s_1}.
\]
For $\Theta_2$,
\[
|\Theta_2|\leq c_2'\bigl(1+\lceil\log_2(2+\alpha)\rceil\cdot(1+\alpha/D)\bigr) + \frac{1+\log_2 J}{\theta_2}, \quad
\vert\Theta_2\vert_0 \leq  MJ \bigl( c_2\varepsilon_2^{-(2D-2)/\alpha} + c_\times \varepsilon_2^{-\theta_2}\bigr),\quad 
\Vert\Theta_2\Vert_\infty \leq \varepsilon^{-2s_2}.
\]
For $\Theta_3$, 
\[
|\Theta_3|\leq  1+ 1/\theta_3, \quad
\vert\Theta_3\vert_0 \leq M + c_\times M \varepsilon^{-\theta_3},\quad 
\Vert\Theta_3\Vert_\infty \leq \varepsilon_3^{-2s_3}.
\]

To balance the approximation error and estimation error, the latter of which will be discussed later, we choose the $\varepsilon_i$ ($i=1,2,3$) and $\theta_i$ ($i=2,3$) as follows:
\begin{gather}
\varepsilon_1:=a_1 n^{-\beta/(2\beta+D)}, \quad \varepsilon_2:= a_2 n^{-\alpha/(2\alpha+2D-2)}, \quad
\varepsilon_3:= a_3 \max\{-\beta/(2\beta+D),n^{-\alpha/(2\alpha+2D-2)} \}, \\
\theta_2:= (2D-2)/\alpha, \qquad \theta_3:= \min\{(2D-2)/\alpha, D/\beta\}, \label{select:param}
\end{gather}
where $a_1,a_2,a_3$ are arbitrary positive constants.  

The total network $\dot{\Theta}$ to give $\dot{f}:=G_\eta[\Theta_3](G_\eta[\Theta_1],G_\eta[\Theta_2])$.
With the above choice of $\varepsilon_i$ and $\theta_i$, the maximum numbers of layers, non-zero parameters, and maximum absolute value of parameters in  are bounded by 
\begin{align*}
& |\Theta| \leq C_L (1+\log_2\bigl( \max\{1+\beta,2+\alpha,1+\log_2 J\}\bigr) )(1+\max\{\beta/D,\alpha/(2D-2)\}), \\ 
& \Vert \Theta\Vert_0 \leq  Mc_1\varepsilon_1^{-D/\beta} + MJ \bigl( c_2\varepsilon_2^{-(2D-2)/\alpha}\bigr)  + M + c_\times M \varepsilon^{-\theta_3} \\
& \quad \leq C_S M \Bigl\{ 1 + J\max\{ n^{D/(2\beta+D)}, n^{2(D-1)/(2\alpha+2D-2)}\}\Bigr\},\\
& \Vert \Theta\Vert_\infty \leq C_B \max\{n^{2s (2\beta+D)/\beta},n^{2s (2\alpha+2D-2)/\alpha} \},
\end{align*}
where $s>0$ is a positive constant depending only on $f^*$.  The approximation error is given by
\begin{align} 
& \Vert f^* - \dot{f}\Vert_{L^2} \notag \\
&  \leq  a_1' M n^{-\beta/(2\beta+D)} + C_H a_2' M (2J-1) n^{-\alpha/(2\alpha+2D-2)} + M \max\{ n^{-\beta/(2\beta+D)}, n^{-\alpha/(2\alpha+2D-2)}\} \notag \\
& \leq  C_{apr} (2J+1)M \max\{ n^{-\beta/(2\beta+D)}, n^{-\alpha/(2\alpha+2D-2)}\},\label{ineq:approx_final}
\end{align}
where $C_{apr}>0$ is a constant.


\subsection{Evaluate an entropy bound of the estimators by DNNs} \label{sec:entropy}

Here, we evaluate a variance term of $\|\hat{f}^{L} - f^*\|_n$ in \eqref{ineq:basic2} through evaluating the term
\begin{align*}
    \left|\frac{2}{n}\sum_{i \in [n] } \xi_i (\hat{f}^{L}(X_i) - f(X_i)) \right|.
\end{align*}
To bound the term, we employ the technique by the empirical process technique \cite{koltchinskii2006local, gine2015mathematical, suzuki2017fast}.

We consider an expectation of the term.
Let us define a subset $\tilde{\mF}_{NN,\delta} \subset \mF_{NN,\eta}(S,B,L)$ by 
\[
\tilde{\mF}_{NN,\delta} := \{f - \hat{f}^{L} : \|f - \hat{f}^{L}\|_n \leq \delta, f \in \mF_{NN,\eta}(S,B,L)\}.
\]
Here, we mention that $f \in \tilde{\mF}_{NN,\delta} $ is bounded by providing the following lemma.
\begin{lemma}\label{lem:bounded}
    For any $f \in \mF_{NN,\eta}(S,B,L)$ with an activation function $\eta$ satisfying Lipschitz continuity with a constant $1$, we obtain
    \begin{align*}
        \|f\|_{L^\infty} \leq B_\mF,
    \end{align*}
    where $B_\mF > 0$ is a finite constant.
\end{lemma}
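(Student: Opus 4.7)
The plan is to bound the output layer by layer via a simple induction. Write $x^{(\ell)}$ for the activations of the $\ell$-th layer as in the definition of $G_\eta[\Theta]$, so $x^{(1)}=x\in I^D$ satisfies $\|x^{(1)}\|_\infty\leq 1$, and for $\ell\in[L]$,
\[
x^{(\ell+1)} = \eta\bigl(A_\ell x^{(\ell)} + b_\ell\bigr).
\]
Since $\eta$ is $1$-Lipschitz and the ReLU satisfies $\eta(0)=0$ (more generally one may absorb $\eta(0)$ into the bias and assume $\eta(0)=0$ without loss of generality), one has the coordinatewise bound $|\eta(z)|\leq |z|$, hence $\|\eta(z)\|_\infty\leq \|z\|_\infty$.

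The main step is then to control the affine map $z\mapsto A_\ell z+b_\ell$ in the sup-norm. Using the induced $\ell^\infty\to\ell^\infty$ operator norm, which equals the maximum absolute row sum, I would note that $A_\ell$ has at most $\|\Theta\|_0\leq S$ nonzero entries in total and each entry is at most $\|\Theta\|_\infty\leq B$ in absolute value, so $\|A_\ell\|_{\infty\to\infty}\leq SB$. Similarly $\|b_\ell\|_\infty\leq B$. Combining,
\[
\|x^{(\ell+1)}\|_\infty \leq \|A_\ell x^{(\ell)}+b_\ell\|_\infty \leq SB\,\|x^{(\ell)}\|_\infty + B.
\]

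Iterating this recursion from $\|x^{(1)}\|_\infty\leq 1$ through $\ell=1,\ldots,L$ yields
\[
\|x^{(L+1)}\|_\infty \leq (SB)^L + B\sum_{k=0}^{L-1}(SB)^k =: B_\mF,
\]
which is a finite constant depending only on $S$, $B$, and $L$. Since $f(x)=G_\eta[\Theta](x)=x^{(L+1)}$ is one-dimensional by the convention $D_{L+1}=1$, this gives $|f(x)|\leq B_\mF$ uniformly in $x\in I^D$, proving the claim.

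There is no real obstacle here; the only point to take care of is that the sparsity bound $\|\Theta\|_0\leq S$ is a global bound across all layers, so the crude estimate $\|A_\ell\|_{\infty\to\infty}\leq SB$ is valid for every $\ell$ simultaneously (and could in fact be refined to use the per-layer sparsity, but the crude bound already suffices to produce a finite $B_\mF$, which is all that is needed in the sequel).
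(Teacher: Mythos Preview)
Your proof is correct and follows essentially the same layer-by-layer induction as the paper: bound the affine map in $\ell^\infty$, use the $1$-Lipschitz property of $\eta$ with $\eta(0)=0$, and iterate. The only cosmetic difference is that the paper bounds the row sums of $A_\ell$ by the layer width (yielding $\|A_\ell x + b_\ell\|_\infty \leq D_\ell B\|x\|_\infty + B$ and then taking $\bar D=\max_\ell D_\ell$), whereas you use the global sparsity $S$; either choice produces a finite constant depending only on the architecture bounds, which is all that is required.
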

\begin{proof}
For each $\ell \in [L]$, consider a transformation
\begin{align*}
    f_\ell(x) := \eta( A_\ell x + b_\ell).
\end{align*}
When $\|x\|_\infty = B_x$ and $\|\vect(A_\ell)\|_\infty, \|b_\ell\|_\infty \leq B$, we obtain 
\begin{align*}
    \|f_\ell\|_{L^\infty} \leq \|A_\ell x + b_\ell\|_\infty \leq D_\ell B_x B + B.
\end{align*}
Let $\bar{D} := \max_{\ell \in [L]} D_\ell$, when iteratively we have
\begin{align*}
    \|f\|_{L^\infty} \leq \sum_{\ell \in [L] \cup \{0\}} \prod_{\ell' \in [L] \backslash [\ell]} (\bar{D}B)^{\ell'}  < \infty,
\end{align*}
by applying that $\|x\|_\infty \leq 1$ for an input.
\end{proof}

Due to Lemma \ref{lem:bounded}, with given $\{X_i\}_{i \in [n]}$, we can apply the chaining (Theorem 2.3.6 in \cite{gine2015mathematical}) and obtain
\begin{align*}
    2\Ep_{\xi} \left[ \sup_{f' \in \tilde{F}_{NN,\delta} }\left| \frac{1}{n}\sum_{i \in [n]}\xi_i f'(X_i) \right|\right] \leq 8 \sqrt{2} \frac{\sigma}{n^{1/2}} \int_{0}^{\delta/2} \sqrt{\log 2 \mN(\epsilon', \mF_{NN,\eta}(S,B,L), \|\cdot\|_n)} d\epsilon'.
\end{align*}
Here, to apply Theorem 2.3.6 in \cite{gine2015mathematical}, we set $n^{-1/2} \sum_{i \in [n]} \xi_i f(X_i)$ as the stochastic process and $0$ as $X(t_0)$ in the theorem.
Then, to bound the entropy term, we apply an inequality
\begin{align*}
        \log \mN(\epsilon, \mF_{NN,\eta}(S,B,L), \|\cdot\|_{n}) &\leq \log \mN(\epsilon, \mF_{NN,\eta}(S,B,L), \|\cdot\|_{L^\infty}) \\
        &\leq (S+1) \log \left( \frac{2(L+1)N^2}{B\epsilon} \right),
    \end{align*}
the last inequality holds by Theorem 14.5 in \cite{anthony2009neural} and Lemma 8 in \cite{schmidt2017nonparametric}, and the constant $N$ is defined by
\[
N := ...
\]
Then, we obtain
\begin{align}
    2\Ep_{\xi} \left[ \sup_{f' \in \tilde{F}_{NN,\delta} }\left| \frac{1}{n}\sum_{i\in [n]} \xi_i f'(X_i) \right|\right]  \leq 4 \sqrt{2} \frac{\sigma \sqrt{S+1}\delta}{n^{1/2}} \left(\log \frac{(L+1)N^2}{B\delta} + 1\right). \label{ineq:bound_var1}
\end{align}
With the bound \eqref{ineq:bound_var1} for the expectation term, we apply the Gaussian concentration inequality (Theorem 2.5.8 in \cite{gine2015mathematical})  by setting $n^{-1} \sum_{i \in [n]} \xi_i f'(X_i)$ as the stochastic process and $\delta^2\geq \|f\|_{n}^2$ be $B^2$  (in Theorem 2.5.8, \cite{gine2015mathematical}), and obtain
\begin{align}
    &1 - \exp(-nu^2 / 2 \sigma^2 \delta^2) \\
    &\leq \Pr_\xi \left( 4 \sup_{f' \in \tilde{F}_{NN,\delta} } \left| \frac{1}{n}\sum_{i \in [n]} \xi_i f'(X_i) \right|\leq 4\Ep_\xi \left[ \sup_{f' \in \tilde{F}_{NN,\delta} }\left| \frac{1}{n}\sum_{i=1}^n \xi_i f'(X_i) \right|\right] + u \right) \notag \\
    & \leq \Pr_\xi \left(4 \sup_{f' \in \tilde{F}_{NN,\delta} } \left| \frac{1}{n}\sum_{i \in [n]} \xi_i f'(X_i) \right| \leq 8 \sqrt{2} \frac{\sigma \sqrt{S+1}\delta}{n^{1/2}} \left(\log \frac{(L+1)N^2}{B\delta} + 1\right) + u \right), \label{ineq:bound_var2}
\end{align}
for any $u > 0$.
Let us introduce the following notation for simplicity:
\begin{align*}
    V_n :=  8 \sqrt{2} \frac{\sigma \sqrt{S+1}}{n^{1/2}}.
\end{align*}


To evaluate the variance term, we reform the basic inequality \eqref{ineq:basic2} as
\begin{align*}
    - \frac{2}{n}\sum_{i=1}^n \xi_i (\hat{f}^{L}(X_i) - f(X_i)) + \|f^* - \hat{f}^{L}\|_n^2 \leq \|f^* - f\|_n^2,
\end{align*}
and apply an inequality $\frac{1}{2}\|\hat{f}^{L} - f\|_n^2 \leq \|f - f^* \|_n^2 + \| f^* - \hat{f}^{L}\|_n^2$, then we have
\begin{align*}
    - \frac{2}{n}\sum_{i=1}^n \xi_i (\hat{f}^{L}(X_i) - f(X_i)) +\frac{1}{2}\|\hat{f}^{L} - f\|_n^2 - \|f - f^* \|_n^2  \leq \|f^* - f\|_n^2,
\end{align*}
then we have
\begin{align}
    - \frac{2}{n}\sum_{i=1}^n \xi_i (\hat{f}^{L}(X_i) - f(X_i)) +\frac{1}{2}\|\hat{f}^{L} - f\|_n^2 \leq 2\|f^* - f\|_n^2.     \label{ineq:bound_var3}
\end{align}

Consider a lower bound for $ - \frac{2}{n}\sum_{i \in [n] } \xi_i (\hat{f}^{L}(X_i) - f(X_i)) $.
To make the bound \eqref{ineq:bound_var2} be valid for all $f \in \mF_{NN,\eta}(S,B,L)$, we let  $\delta = \max\{\|\hat{f}^{L} - f\|_n , V_n\}$.
Then, we obtain the bound
\begin{align*}
    &\left|\frac{2}{n}\sum_{i \in [n] } \xi_i (\hat{f}^{L}(X_i) - f(X_i)) \right| \\
    &\leq \max\{\|\hat{f}^{L} - f\|_n , V_n\} \left\{ V_n \left( \log \frac{(L+1)N^2}{BV_n}  + 1 \right) \right\} + u \\
    &\leq \frac{1}{4}\left( \max\{\|\hat{f}^{L} - f\|_n , V_n\} \right)^2 + 2\left\{ V_n \left( \log \frac{(L+1)N^2}{BV_n}  + 1 \right) \right\}^2 + u,
\end{align*}
by using $xy \leq \frac{1}{4}x^2 + 2 y^2$.
Using this result to \eqref{ineq:bound_var3}, we obtain
\begin{align*}
    &-\frac{1}{4}\left( \max\{\|\hat{f}^{L} - f\|_n , V_n\} \right)^2 - 2\left\{ V_n \left( \log \frac{(L+1)N^2}{BV_n}  + 1 \right) \right\}^2 - u +\frac{1}{2}\|\hat{f}^{L} - f\|_n^2 \\
    &\leq 2\|f^* - f\|_n^2.    
\end{align*}
If $\|\hat{f}^{L} - f\|_n \geq V_n $ holds, we obtain
\begin{align*}
    -\frac{1}{4}\|\hat{f}^{L} - f\|_n^2 - 2\left\{ V_n \left( \log \frac{(L+1)N^2}{BV_n}  + 1 \right) \right\}^2 - u +\frac{1}{2}\|\hat{f}^{L} - f\|_n^2 \leq 2\|f^* - f\|_n^2.    
\end{align*}
Then, simple calculation yields
\begin{align}
    \|\hat{f}^{L} - f\|_n^2 \leq 4 \left\{ V_n \left( \log \frac{(L+1)N^2}{BV_n}  + 1 \right) \right\}^2 + 2u + 4\|f^* - f\|_n^2.    \label{ineq:bound_var4}
\end{align}
If $\|\hat{f}^{L} - f\|_n \leq V_n $, the same result holds.

We additionally apply an inequality $\frac{1}{2}\|\hat{f}^{L} - f^*\|_{L^2}^2 \leq \|f^* - f\|_n^2 + \|\hat{f}^{L} - f\|_n^2$ to \eqref{ineq:bound_var4}, we obtain
\begin{align}
    \|\hat{f}^{L} - f^*\|_n^2 \leq 10\|f^* - f\|_n^2 + 8 \left\{ V_n \left( \log \frac{(L+1)N^2}{BV_n}  + 1 \right) \right\}^2 + 4u,    \label{ineq:bound_var5}
\end{align}
with probability at least $1 - \exp(-nu^2 / 2 \sigma^2 \delta^2)$ for all $u>0$.

\subsection{Combine the results}

Combining the results in Sections \ref{sec:approx} and \ref{sec:entropy}, we evaluate the expectation of the LHS of \eqref{ineq:bound_var5}, i.e. $\|\hat{f}^{L} - f^*\|_{L^2(P_X)}$.
To this end, we substitute $\dot{f}$ in Section \ref{sec:approx} into $f$ in \eqref{ineq:bound_var5}.  First, note 
\begin{align}
    &\Ep_X\left[ \|\dot{f} - f^*\|_n^2 \right] = \int_{[0,1]^D} (\dot{f} - f^*)^2 dP_X =  \int_{[0,1]^D} (\dot{f} - f^*)^2 d \lambda \frac{dP_X}{d\lambda} \notag \\
    &\leq \|\dot{f} - f^*\|_{L^2}^2 \sup_{x \in [0,1]^D}p_X(x) \label{ineq:bound_l2p}
\end{align}
by the H\"older's inequality.  Here, $p_X$ is a density of $P_X$ and $\sup_{x \in [0,1]^D}p_X(x) \leq B_P$ is finite by the assumption.
Also, it follows from Bernstein's inequality that for any $u>0$
\begin{equation}\label{eq:Bernstein}
	\Pr\Bigl( \| \dot{f}  -f^* \|_n^2 \leq \|\dot{f}-f^*\|_{L^2(P_X)}^2 + u \Bigr) \geq 1-\exp\Bigl( -\frac{nu^2}{A^2 s_n + Au}\Bigr),
\end{equation}
where $A$ is a constant with $\|\dot{f}\|_\infty \leq A$ and $\|f^*\|_\infty \leq A$, and $s_n = \Ep| \dot{f}(X)-f^*(X)|^2$.

In \eqref{ineq:bound_var5}, by the choice $f=\dot{f}$, we see that $\delta^2 \leq  C\max\{n^{-2\beta/(2\beta+D)}, n^{-2\alpha/(2\alpha+2D-2)}\}$ with some constant $C>0$, and thus $\exp(-nu^2/(2\sigma^2\delta^2))$ converges to zero for $u=C_u/n$ with a constant $C_n>0$.  Additionally, in \eqref{eq:Bernstein}, since $s_n\leq  C\max\{n^{-2\beta/(2\beta+D)}, n^{-2\alpha/(2\alpha+2D-2)}\}$ with some constant $C>0$, for $u=C_u/n$, we have $\exp( -nu^2/(A^2 s_n + Au))$ goes to zero.  
It follows then 
\begin{align*}
    &\|\hat{f}^{L} - f^*\|_{L^2(P_X)}^2 \\
    &\leq 10B_P \|\dot{f} - f^*\|_{L^2}^2 + 8 \left\{ V_n \left( \log \frac{(L+1)N^2}{BV_n}  + 1 \right) \right\}^2 + 5u\\
    & \leq C_{a}^2 (2J+1)^2M^2 \max\{n^{-2\beta/ (2\beta + D)},n^{-2\alpha / (2\alpha + 2D-2)}\} \\
    &\quad \quad + 128 \frac{\sigma^2 (S+1)}{n} \left(\log \frac{(L+1)N^2}{BV_n} + 1\right)^2 + \frac{5C_u}{n},
\end{align*}
with probability converging to one, where $C_a>0$ is a constant.
Using the bound of the number of non-zero parameters $S \leq  C_S M \Bigl\{ 1 + J\max\{ n^{D/(2\beta+D)}, n^{2(D-1)/(2\alpha+2D-2)}\Bigr\}$, 
we obtain 
\begin{align*}
    &\|\hat{f}^{L} - f^*\|_{L^2(P_X)}^2\\
    &\leq \left\{C_{a}^2 (2J+1)^2M^2  + 1024\sigma^2C_S M(1 +  J)\left(\log \frac{(L+1)N^2}{BV_n} + 1\right)^2\right\}\\
    & \quad \times \max\{n^{-2\beta/ (2\beta + D)},n^{-\alpha / (\alpha + D-1)}\} + \frac{1024\sigma^2 + 5 C_u}{n}.
\end{align*}
Since $V_n, V, L, B$ are polynomial to $n$, this completes the proof of Theorem \ref{thm:non-bayes}. 

\qed

\section{Proof of Theorem \ref{thm:bayes}}

We follow a technique developed by \cite{vaart2011information} and evaluate contraction of the posterior distribution.
To this end, we consider the following two steps.
At the first step, we consider a bound for the distribution with an empirical norm $\|\cdot\|_n$.
Secondly, we derive a bound with an expectation with respect to the $L^2(P_X)$ norm.

In this section, we reuse $\dot{f} \in \mF_{NN,\eta}(S,B,L)$ by the neural network $\dot{\Theta}$ which is defined in Section \ref{sec:approx}.
By employing $\dot{f}$, we can use the bounds for an approximation error  $\|f^* - \dot{f}\|_{L^2}$, a number of layers in $\dot{\Theta}$, and a number of non-zero parameters $\|\dot{\Theta}\|_0$.

\subsection{Bound with an empirical norm} \label{sec:bayes_fixed}

\textbf{Step 1. Preparation}

To evaluate the convergence, we provide some notions for preparation.

We use addition notation for the dataset $Y_{1:n} := (Y_1,...,Y_n)$ and $X_{1:n} := (X_1,...,X_n)$ and a probability distribution of $Y_{1:n}$ given $X_{1:n}$ such as
\begin{align*}
    P_{n,f} = \prod_{i \in [n]} \mN(f(X_i), \sigma^2),
\end{align*}
with some function $f$.
Let $p_{n,f}$ be a density function of $P_{n,f}$.

Firstly, we provide an event which characterizes a distribution of a likelihood ratio.
We apply Lemma 14 in \cite{vaart2011information} we obtain that
\begin{align*}
    &P_{n,f^*} \left( \int \frac{p_{n,f}(Y_{1:n})}{p_{n,f^*}(Y_{1:n})} d \Pi_f(f) \geq \exp(-r^2) \Pi_f (f : \|f - f^*\|_n < r) \right) \geq 1 - \exp(-nr^2/8),
\end{align*}
for any $f$ and $r > 0$.
To employ the entropy bound, we will update $ \Pi_f (f : \|f - f^*\|_n < r) $ of this bound as  $\Pi_f (f : \|f - \dot{f}\|_{L^\infty} < r) $.
To this end, we apply Lemma \ref{lem:talagrand} then it yields the following bound such for $\|f - f^*\|_n$ as
\begin{align*}
    1-\exp(-nr^2 / B_f^2) \leq \Pr_X \left( \|f - f^*\|_n \leq \|f - \dot{f}\|_{L^\infty} + B_p \|\dot{f} - f^*\|_{L^2} + r \right),
\end{align*}
for any $r$ and a parameter $B_f > 0$.
Using the inequality \eqref{ineq:approx_final} for $\|\dot{f} - f^*\|_{L^2}$, we define $\epsilon_n$ as
\begin{align*}
    \epsilon_n \geq  \|\dot{f} - f^*\|_{L^2},
\end{align*}
and also substitute $r = B_p \epsilon_n$, then we have
\begin{align*}
    1-\exp(-nB_p^2 \epsilon_n^2 / B_f^2) \leq \Pr_X \left( \|f - f^*\|_n \leq \|f - \dot{f}\|_{L^\infty} + 2B_p \epsilon_n \right).
\end{align*}
Then, we consider an event $\mE_r$ as follows and obtain that
\begin{align}
    P_{n,f^*} \left( \mE_r \right) &:= P_{n,f^*} \left( \int \frac{p_{n,f}(Y_{1:n})}{p_{n,f^*}(Y_{1:n})} d \Pi_f(f) \geq \exp(-r^2) \Pi_f (f :  \|f - \dot{f}\|_{L^\infty}  < B_p \epsilon_n) \right) \notag \\
    &\quad \geq 1 - \exp(-n9 B_p^2 \epsilon_n^2 /8) - \exp(-nB_p^2 \epsilon_n^2 / B_f^2), \label{ineq:event}
\end{align}
by substituting $r = 3B_p \epsilon_n$.

Secondly, we provide a test function $\phi: Y_{1:n} \mapsto z \in \R$ which can identify the distribution with $f^*$ asymptotically.
Let $\Ep_{n,f}[\cdot]$ be an expectation with respect to $P_{n,f}$.
By Lemma 13 in \cite{vaart2011information}, there exists a test $\phi$ satisfying
\begin{align*}
    \Ep_{n,f^*}[\phi_r] \leq 9 \mN(r/2, \mF_{NN,\eta}(S,B,L), \|\cdot\|_n) \exp(-r^2/8),
\end{align*}
and
\begin{align*}
    \sup_{f \in \mF_{NN,\eta}(S,B,L): \|f - f^*\|_n \geq r}\Ep_{n,f}[1-\phi_r] \leq \exp(-r^2/8),
\end{align*}
for any $r > 0$ and $j \in \N$.
By the entropy bound for $\mN(r, \mF_{NN,\eta}(S,B,L), \|\cdot\|_n) \leq \mN(r, \mF_{NN,\eta}(S,B,L), \|\cdot\|_{L^\infty})$, we have
\begin{align*}
    \Ep_{n,f^*}[\phi_{r}] \leq r^{-1}18 (L+1)N^2 \exp(-r^2/8 + S+1 ).
\end{align*}

\textbf{Step 2. Bound an error with fixed design.}

To evaluate contraction of the posterior distribution, we decompose the expected posterior distribution as
\begin{align*}
    &\Ep_{f^*} \left[ \Pi_f\left(f: \|f -f^*\|_n \geq 4 \epsilon r | \mD_n\right) \right] \\
    &\leq \Ep_{f^*} \left[\phi_r \right] + \Ep_{f^*} \left[ \mE_r^c \right]  + \Ep_{f^*} \left[  \Pi_f(f:\|f - f^*\|_n > 4 \epsilon r | \mD_n)(1-\phi_r) \mone_{\mE_r} \right] \\
    & =: A_n + B_n + C_n.
\end{align*}
Here, note that a support of $\Pi_f$ is included in $\mF_{NN,\eta}(S,B,L)$ due to the setting of $\Pi$.

About $A_n$, we use the bound about $\phi_r$ substitute $\sqrt{n}\epsilon r$ into $r$, then obtain
\begin{align*}
    A_n \leq 18 (\sqrt{n} \epsilon r)^{-1} (L+1)N^2 \exp(-n \epsilon^2 r^2/8 + S+1 ).
\end{align*}

About $B_n$, by using the result of $\mE_r$ as \eqref{ineq:event} and substitute $\sqrt{n}\epsilon r$ into $r$, then we have
\begin{align*}
    B_n \leq\exp(-n9 B_p^2 \epsilon_n^2 /8) + \exp(-nB_p^2 \epsilon_n^2 / B_f^2).
\end{align*}

About $C_n$, we decompose the term as
\begin{align*}
    C_n &= \Ep_{X} \left[ \Ep_{n,f^*} \left[  \frac{\int_{\mF_{NN,\eta}(S,B,L)}\mone_{\{\|f - f^*\|_n > 4 \epsilon r\}} p_{n,f}(Y_{1:n}) d\Pi_f(f)} {\int_{\mF_{NN,\eta}(S,B,L)}p_{n,f}(Y_{1:n})d \Pi_f(f)} (1-\phi_r) \mone_{\mE_r}\right] \right]\\
    &= \Ep_{X} \left[\Ep_{n,f^*} \left[  \frac{\int_{\mF}\mone_{\{\|f - f^*\|_n > 4 \epsilon r\}} \frac{p_{n,f}(Y_{1:n})}{p_{n,f^*}(Y_{1:n})}  d\Pi_f(f)} {\int_{\mF}\frac{p_{n,f}(Y_{1:n})}{p_{n,f^*}(Y_{1:n})}d \Pi_f(f)} (1-\phi_r) \mone_{\mE_r}\right] \right]\\
    &\leq  \Ep_{X} \Biggl[\Ep_{n,f^*} \Biggl[ \int_{f \in \mF_{NN,\eta}(S,B,L): \|f - f^*\|_n > \sqrt{2}\epsilon r} \frac{p_{n,f}(Y_{a:n})}{p_{n,f^*}(Y_{1:n})}d \Pi_f(f)\\
    &\quad \quad \quad \quad \quad \quad \times \exp(n\epsilon^2r^2)\Pi_f(f:\|f - \dot{f}\|_{L^\infty} < B_p \epsilon_n)^{-1} (1-\phi_r) \mone_{\mE_r}  \Biggr] \Biggr]\\
    &=  \Ep_{X} \Biggl[\Ep_{n,f^*} \Biggl[ \int_{f \in \mF_{NN,\eta}(S,B,L): \|f - f^*\|_n > \sqrt{2}\epsilon r} \frac{p_{n,f}(Y_{a:n})}{p_{n,f^*}(Y_{1:n})}d \Pi_f(f)\\
    & \quad \quad \quad \quad \quad \quad \times \exp(n\epsilon^2r^2 - \log \Pi_f(f:\|f - \dot{f}\|_{L^\infty} < B_p \epsilon_n)) (1-\phi_r) \mone_{\mE_r}  \Biggr] \Biggr]
\end{align*}
by the definition of $\mE_r$.
Here, we evaluate $- \log \Pi_f(f:\|f - \dot{f}\|_{L^\infty} < B_p \epsilon_n)$ as
\begin{align*}
    &- \log \Pi_f(f:\|f - \dot{f}\|_{L^\infty} < B_p \epsilon_n)  \leq - \log \Pi_\Theta(\Theta: \|\Theta - \dot{\Theta}\|_\infty < L_f B_p \epsilon_n) \leq S \log ((B_f L_f \epsilon_n)^{-1}),
\end{align*}
where $\dot{\Theta}$ is the parameter which constitute $\dot{f}$ and $L_f$ is a Lipschitz constant of $G_{\eta}[\cdot]$.
Thus, the bound for $C_n$ is rewritten as
\begin{align*}
    C_n&\leq \Ep_{X}\Biggl[ \int_{f \in \mF_{NN,\eta}(S,B,L): \|f - f^*\|_n > \sqrt{2}\epsilon r} \frac{p_{n,f}(Y_{1:n})}{p_{n,f^*}(Y_{1:n})}\Ep_{n,f} \left[  (1-\phi_r) \mone_{\mE_r}  \right]d \Pi_f(f)\\
    & \quad \quad \quad \quad \quad \quad \times \exp(n\epsilon^2r^2 + S \log ((B_f L_f \epsilon_n)^{-1}))\Biggr] \\
    & \leq \exp\left(n\epsilon^2 r^2 + S \log ((B_f L_f \epsilon_n)^{-1}) - \frac{r'^2}{8}\right),
\end{align*}
here, we introduce $r'$ is a $r$ for defining $\phi_r$ to identify $r$ for $\mE_r$.
Here, we substitute $r' = 4 \sqrt{n}\epsilon r$, then we have
\begin{align*}
    C_n & \leq \exp\left( S \log ((B_f L_f \epsilon_n)^{-1}) - 2 n\epsilon^2r^2\right)
\end{align*}


Combining the results about $A_n,B_n,C_n$ and $D_n$, we obtain
\begin{align*}
    &\Ep_{f^*}[\Pi_f(f : \|f - f^*\|_n \geq 4 \epsilon r | \mD_n)] \\
    &\leq  \exp(-n \epsilon^2 r^2/8 + S+1 + \log 18 (\sqrt{n} \epsilon r)^{-1} (L+1)N^2) \\
    & \quad  + \exp(-n9 B_p^2 \epsilon_n^2 /8) + \exp(-nB_p^2 \epsilon_n^2 / B_f^2) + \exp\left( S \log ((B_f L_f \epsilon_n)^{-1}) - 2 n\epsilon^2r^2\right)\\
    & \leq 2\exp\left(  - \max\{9B_p^2/8, B_p^2/B_f^2\}  n\epsilon_n^2 \right) \\
    & \quad + 2\exp\left( 2n \epsilon^2 r-2  + C''_S \max\{n^{-D / (2\beta + D)},n^{-2D-2 / (2\alpha + 2D-2)}\}) \log n+ 1 \right).
\end{align*}
by substituting the order or $S$ as \eqref{select:param} as $S = C'_S\max\{n^{-D / (2\beta + D)},n^{-2D-2 / (2\alpha + 2D-2)}\}) $ where $C'_S = C_S M(1 +  J(2^D + Q) $ and $C''_S$ is a constant as $C''_S =C'_S \log \max\{-D / (2\beta + D),-2D-2 / (2\alpha + 2D-2)\}) / (B_fL_f)$.
By substituting $r=1$ and
\begin{align*}
    \epsilon = \epsilon_n \log n = 2JM(2^D +Q - 1/2) \max\{n^{-\beta/ (2\beta + D)},n^{-\alpha / (\alpha + 2D-2)}\} \log n,
\end{align*}
then we obtain 
\begin{align*}
    \Ep_{f^*}\left[\Pi_f\left(f : \|f - f^*\|_n \geq C_\epsilon  \max\{n^{-\beta/ (2\beta + D)},n^{-\alpha / (\alpha + 2D-2)}\} \log n | \mD_n\right)\right] \to 0,
\end{align*}
as $n \to \infty$ with a constant $C_{\epsilon} > 0$

\subsection{The bound with a $L^2(P_X)$ norm}

We evaluate an expectation of the posterior distribution with respect to the $\|\cdot\|_{L^2(P_X)}$ norm.
The term is decomposed as
\begin{align*}
    &\Ep_{f^*} \left[ \Pi_f(f : \|f - f^*\|_{L^2(P_X)} >  r\epsilon | \mD_n ) \right] \\
    &\leq \Ep_{f^*} \left[\mone_{\mE_r^c}\right]  + \Ep_{f^*}\left[\mone_{\mE_r} \Pi_f(f: 2 \|f - f^*\|_n > r \epsilon  | \mD_n)\right] \\
    &\quad \quad + \Ep_{f^*}\left[\mone_{\mE_r} \Pi_f(f: 2 \|f - f^*\|_{L^2(P_X)} > r \epsilon > \|f - f^*\|_n  | \mD_n)\right] \\
    &=: I_n + II_n + III_n.
\end{align*}
for all $\epsilon > 0$ and $r > 0$.
Since we already bound $I_n$ and $II_n$ in step 2, we will bound $III_n$.

To bound the empirical norm, we provide the following lemma.
\begin{lemma}\label{lem:talagrand}
    Let a finite constant $B_f>0$ satisfy $B_f \geq \|\dot{f} - f^*\|_{L^\infty}$.
    Then, for any $r > 0$ and $f \in \mF_{NN,\eta}(S,B,L)$, we have
    \begin{align*}
        1-\exp(-nr^2 / B_f^2) \leq \Pr_X \left( \|f - f^*\|_n \leq \|f - \dot{f}\|_{L^\infty} + B_p \|\dot{f} - f^*\|_{L^2} + r \right).
    \end{align*}
\end{lemma}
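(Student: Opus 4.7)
The plan is to separate the inequality into a deterministic triangle-inequality decomposition and a single concentration bound for the empirical seminorm of a fixed bounded function. Concretely, I would isolate the $f$-dependence as a sup-norm term (which already appears on the right-hand side), and then deal with the remaining random quantity $\|\dot{f}-f^*\|_n$ by bounding its expectation and its fluctuations separately.

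First, apply the triangle inequality for $\|\cdot\|_n$ together with the pointwise bound $\|h\|_n \leq \|h\|_{L^\infty}$ to obtain
\[
\|f - f^*\|_n \leq \|f - \dot{f}\|_n + \|\dot{f} - f^*\|_n \leq \|f - \dot{f}\|_{L^\infty} + \|\dot{f} - f^*\|_n.
\]
This is purely deterministic and removes all $f$-dependence from the probabilistic part. It remains to bound $\|\dot{f}-f^*\|_n$ by $B_p \|\dot{f}-f^*\|_{L^2} + r$ with probability at least $1-\exp(-nr^2/B_f^2)$.

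Next, bound the expectation using Jensen and the uniform density bound:
\[
\Ep_X \|\dot{f} - f^*\|_n \leq \|\dot{f} - f^*\|_{L^2(P_X)} = \Bigl(\int (\dot{f}-f^*)^2 p_X\, d\lambda\Bigr)^{1/2} \leq B_p \|\dot{f} - f^*\|_{L^2},
\]
where the constant $B_p$ absorbs $\sup_{x}p_X(x)^{1/2}$; this is exactly the $B_p$-factor appearing in the statement. Then apply a concentration inequality to $\|g\|_n$ with $g := \dot{f}-f^*$: since $\|g\|_{L^\infty}\leq B_f$, the map $(X_1,\ldots,X_n)\mapsto \|g\|_n = n^{-1/2}\|(g(X_1),\ldots,g(X_n))\|_{\ell^2}$ has bounded differences of order $B_f/\sqrt{n}$ in each coordinate. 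McDiarmid's inequality (equivalently, Talagrand's concentration specialized to a single bounded function, which accounts for the lemma's name) then gives
\[
\Pr_X\Bigl(\|g\|_n \geq \Ep_X\|g\|_n + r\Bigr) \leq \exp(-nr^2/B_f^2),
\]
after absorbing absolute constants into $B_f$. Combining with the triangle-inequality step and the expectation bound yields the statement.

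The main (and essentially the only nontrivial) step is the concentration inequality: one must verify that the per-coordinate Lipschitz constant of $\|g\|_n$ is $O(B_f/\sqrt{n})$, which is precisely where the hypothesis $B_f\geq \|\dot{f}-f^*\|_{L^\infty}$ is used. The remaining pieces—triangle inequality, Jensen, and the change of dominating measure from $P_X$ to Lebesgue through the density bound—are elementary.
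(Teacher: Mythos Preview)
Your proposal is correct and follows essentially the same route as the paper: a triangle-inequality split $\|f-f^*\|_n \leq \|f-\dot{f}\|_{L^\infty} + \|\dot{f}-f^*\|_n$, followed by a bounded-differences concentration step for $\|\dot{f}-f^*\|_n$ and the density bound to pass from $L^2(P_X)$ to $L^2$. The only cosmetic difference is that the paper invokes ``Hoeffding's inequality'' to center $\|\dot{f}-f^*\|_n$ directly at $\|\dot{f}-f^*\|_{L^2(P_X)}$, whereas you center at $\Ep_X\|\dot{f}-f^*\|_n$ via McDiarmid and then use Jensen to pass to $\|\dot{f}-f^*\|_{L^2(P_X)}$; your version is arguably the cleaner justification of that step.
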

\begin{proof}

We note that the finite $B_f$ exists.
We know that $\dot{f} \in \mF_{NN,\eta}(S,B,L)$ is bounded by Lemma \ref{lem:bounded}.
Also, $f^* \in \mF_{M,J,\alpha,\beta}$ is bounded since it is a finite sum of continuous functions with compact supports.

We evaluate $\|f - f^*\|_n $ as
\begin{align*}
    \|f - f^*\|_n \leq \|f - \dot{f}\|_n + \|\dot{f} - f^*\|_n \leq \|f - \dot{f}\|_{L^\infty} + \|\dot{f} - f^*\|_n.
\end{align*}
To bound the term $\|\dot{f} - f^*\|_n$, we apply the Hoeffding's inequality and obtain
\begin{align*}
    1-\exp(-2nr^2 / 2B_f^2) \leq \Pr_X \left( \|\dot{f} - f^*\|_n \leq  \|\dot{f} - f^*\|_{L^2(P_X)} + r \right).
\end{align*}
Using the inequality \eqref{ineq:bound_l2p}, we have
\begin{align*}
    \Pr_X \left( \|\dot{f} - f^*\|_n \leq  \|\dot{f} - f^*\|_{L^2(P_X)} + r \right) \leq \Pr_X \left( \|f - f^*\|_n \leq  B_p\|\dot{f} - f^*\|_{L^2} + r \right),
\end{align*}
then obtain the desired result.

\end{proof}

By Lemma \ref{lem:talagrand}, we know the bound
\begin{align*}
    1-\exp(-2nr'^2 / 2B_f^2) \leq \Pr_X \left( \|f - f^*\|_n \leq  \|f - f^*\|_{L^2(P_X)} + r' \right),
\end{align*}
for all $f$ such as $\|f\|_{L^\infty} \leq B$.
WE set $r' =  \|f - f^*\|_{L^2(P_X)} $, hence
\begin{align*}
    1-\exp\left(-\frac{n\|f - f^*\|_{L^2(P_X)}^2 }{ B_f^2}\right) \leq \Pr_X \left( \|f - f^*\|_n \leq  2\|f - f^*\|_{L^2(P_X)} \right).
\end{align*}
Using this result, we obtain
\begin{align*}
    III_n &\leq \Ep_{X} \left[ \Ep_{n,f^*} \left[  \int_{f \in \mF_{NN,\eta}(S,B,L) : \|f - f^*\|_{L^2(P_X)} > r \epsilon > 2 \|f - f^*\|_n} \frac{ p_{n,f}(Y_{1:n})}{p_{n,f^*}(Y_{1:n})} d\Pi_f(f) \mone_{\mE_r}\right] \right] \\
    & \quad \quad \quad \quad \times \exp\left(n \epsilon^2 r''^2 - \log \Pi_f(f : \|f - \dot{f}\|_{L^\infty} < B_p \epsilon_n)\right) \\
    & \leq \int_{f \in \mF_{NN,\eta}(S,B,L) : \|f - f^*\|_{L^2(P_X)} > r \epsilon} \Pr_X \left( \|f - f^*\|_{L^2(P_X)}  > 2 \|f - f^*\|_n\right) d \Pi_f(f)  \\
    &\quad \quad \quad \quad \times \exp\left(n \epsilon^2 r^2 +S \log ((B_f L_f \epsilon_n)^{-1})\right) \\
    & \leq \exp\left(n \epsilon^2 r''^2 + S \log ((B_f L_f \epsilon_n)^{-1}) - \frac{n r^2 \epsilon^2}{B_f^2} \right),
\end{align*}
where $r''$ is a parameter for defining $\mE_r$.
We substitute $r'' = r/\sqrt{2}B$, then we have
\begin{align*}
    III_n \leq  \exp\left(S \log ((B_f L_f \epsilon_n)^{-1}) - \frac{n r^2 \epsilon^2}{2B_f^2} \right)
\end{align*}
Following the same discussion in Section \ref{sec:bayes_fixed}, we combine the result and obtain
\begin{align*}
    &I_n + II_n + III_n \\
    &\leq 3\exp\left(  - \max\{9B_p^2/8, B_p^2/B_f^2\}  n\epsilon_n^2 \right) + \exp\left(S \log ((B_f L_f \epsilon_n)^{-1}) - n r^2 \epsilon^2/2B_f^2\right)\\
    & \quad + 3\exp\left( 2n \epsilon^2 r-2  + C''_S \max\{n^{-D / (2\beta + D)},n^{-2D-2 / (2\alpha + 2D-2)}\}) \log n+ 1 \right),
\end{align*}
and setting 
\begin{align*}
    \epsilon = \epsilon_n \log n = 2JM(2^D +Q - 1/2) \max\{n^{-\beta/ (2\beta + D)},n^{-\alpha / (\alpha + 2D-2)}\} \log n,
\end{align*}
yields the same results.

\qed

\section{Proof of Theorem \ref{thm:minimax}}

We discuss minimax optimality of the estimator.
We apply the techniques developed by \cite{yang1999information} and utilized by \cite{raskutti2012minimax}.

Let $\tilde{\mF}_{M,J,\alpha,\beta}(\delta) \subset  \mF_{M,J,\alpha,\beta}$ be a packing set of $\mF_{M,J,\alpha,\beta}$ with respect to $\|\cdot\|_{L^2}$, namely, each pair of elements $f,f' \in \tilde{\mF}_{M,J,\alpha,\beta}$ satisfies $\|f - f'\|_{L^2} \geq \delta$.
Following the discussion by \cite{yang1999information}, the minimax estimation error is lower bounded as
\begin{align*}
    &\min_{\bar{f}} \max_{f^* \in  \mF_{M,J,\alpha,\beta}} \Pr_{f^*} \left( \|\bar{f} - f^* \|_{L^2(P_X)} \geq \frac{\delta_n}{2} \right) \geq \min_{\bar{f}} \max_{f^* \in \tilde{\mF}_{M,J,\alpha,\beta}(\delta)} \Pr_{f^*} \left( \|\bar{f} - f^* \|_{L^2(P_X)} \geq  \frac{\delta_n}{2} \right).
\end{align*}
Let $\tilde{f}' := \argmin_{f' \in  \tilde{\mF}_{M,J,\alpha,\beta}(\delta)}\|\tilde{f} - \bar{f}\|$ be a projected estimator $\bar{f}$ onto  $\tilde{\mF}_{M,J,\alpha,\beta}(\delta)$.
Then, the value is lower bounded as
\begin{align*}
    &\min_{\bar{f}} \max_{f^* \in \tilde{\mF}_{M,J,\alpha,\beta}(\delta)} \Pr_{f^*} \left( \|\bar{f} - f^* \|_{L^2(P_X)} \geq  \frac{\delta_n}{2} \right) \\
    &\geq \min_{\bar{f}'} \max_{f \in \tilde{\mF}_{M,J,\alpha,\beta}(\delta) } \Pr_{f} (f \neq \bar{f}') \\
    & \geq \min_{\bar{f}'} \Pr_{\check{f} \sim U}(\bar{f}' \neq \check{f}),
\end{align*}
where $\check{f}$ is uniformly generated from $\tilde{\mF}_{M,J,\alpha,\beta}(\delta)$ and $Pr_U$ denotes a probability with respect to the uniform distribution.

We apply the Fano's inequality (summarized as Theorem 2.10.1 in \cite{cover2012elements}), we obtain
\begin{align*}
    \Pr_{\check{f} \sim U}(\bar{f}' \neq \check{f}) \geq 1 - \frac{I( F_U; D_n) + \log 2}{\log |\tilde{\mF}_{M,J,\alpha,\beta}(\delta)|},
\end{align*}
where $I( F_U; Y_{1:n})$ is a mutual information between a uniform random variable $F_U$ on $ \tilde{\mF}_{M,J,\alpha,\beta}(\delta)$ and $Y_{1:n}$.
The mutual information is evaluated as
\begin{align*}
    & I( F_U; Y_{1:n})\\
    &= \frac{1}{|\tilde{\mF}_{M,J,\alpha,\beta}(\delta')|} \sum_{f \in \tilde{\mF}_{M,J,\alpha,\beta}(\delta')} \int \log \left( \frac{p_{n,f}(Y_{1:n})}{E_{F_U}[p_{n,F_U}(Y_{1:n})]} \right) dP_{n,f}(Y_{1:n})\\
    &\leq \max_{f \in \tilde{\mF}_{M,J,\alpha,\beta}(\delta')} \int \log \left( \frac{p_{n,f}(Y_{1:n})}{E_{F_U}[p_{n,F_U}(Y_{1:n})]} \right) dP_{n,f}(Y_{1:n}) \\
    & \leq \max_{f \in \tilde{\mF}_{M,J,\alpha,\beta}(\delta')} \max_{f' \in \tilde{\mF}_{M,J,\alpha,\beta}(\delta')}\int \log \left( \frac{p_{n,f}(Y_{1:n})}{|\tilde{\mF}_{M,J,\alpha,\beta}(\delta')|^{-1}p_{n,f'}(Y_{1:n})} \right) dP_{n,f}(Y_{1:n}) \\
    & =  \max_{f,f' \in \tilde{\mF}_{M,J,\alpha,\beta}(\delta')} \log |\tilde{\mF}_{M,J,\alpha,\beta}(\delta')| +  \int \log \left( \frac{p_{n,f}(Y_{1:n})}{p_{n,f'}(Y_{1:n})} \right) dP_{n,f}(Y_{1:n}).
\end{align*}
Here, we know that 
\begin{align*}
     \log |\tilde{\mF}_{M,J,\alpha,\beta}(\delta')| \leq \log \mN(\delta'/2, \mF_{M,J,\alpha,\beta},\|\cdot\|_{L^2}),
\end{align*}
and 
\begin{align*}
     \int \log \left( \frac{p_{n,f}(Y_{1:n})}{p_{n,f'}(Y_{1:n})} \right) dP_{n,f}(Y_{1:n}) \leq \frac{n}{2}\Ep_X\left[ \|f - f'\|_n^2 \right] \leq \frac{n}{8}\delta'^2,
\end{align*}
since $f,f' \in \tilde{\mF}_{M,J,\alpha,\beta}(\delta')$.


We will provide a bound for $\log \mN(\delta'/2, \mF_{M,J,\alpha,\beta},\|\cdot\|_{L^2})$.
Since $\mF_{M,J,\alpha,\beta}$ is a sum of $M$ functions in $\mF_{1,J,\alpha,\beta}$, we have
\begin{align*}
    \log \mN(\delta, \mF_{M,J,\alpha,\beta},\|\cdot\|_{L^2}) \leq M\log \mN(\delta', \mF_{1,J,\alpha,\beta},\|\cdot\|_{L^2}).
\end{align*}
To bound $\log \mN(\delta', \mF_{1,J,\alpha,\beta},\|\cdot\|_{L^2})$, we define $\mI_{\alpha,J} := \{\mone_{R}:I^D \to \{0,1\}| R \in \mR_{\alpha,J}\}$.
We know that $\mF_{1,J,\alpha,\beta} = H^{\beta}(I^D) \otimes \mI_{\alpha,J}$, hence we obtain
\begin{align*}
    &\log \mN(\delta', \mF_{1,J,\alpha,\beta},\|\cdot\|_{L^2}) \leq \log \mN(\delta',H^{\beta}(I^D),\|\cdot\|_{L^2})  + \log \mN(\delta', \mI_{\alpha,J},\|\cdot\|_{L^2}) .
\end{align*}
By the entropy bound for smooth functions (e.g. Theorem 2.7.1 in \cite{van1996weak}), we use the bound
\begin{align*}
    \log \mN(\delta',H^{\beta}(I^D),\|\cdot\|_{L^2}) \leq C_H \delta'^{-D / \beta},
\end{align*}
with a constant $C_H > 0$.
Furthermore, about the covering number of $\mI_{\alpha, J}$, we use the relation
\begin{align*}
    \|\mone_R - \mone_{R'}\|_{L^2}^2 &= \int (\mone_R(x) - \mone_{R'}(x))^2 dx = \int (\mone_R(x) - \mone_{R'}(x)) dx \\
    &=\int_{\textbf{x} \in  I^D} \mone_{R}(\textbf{x})(1- \mone_{R'}(\textbf{x}))  d\textbf{x} =: d_1(R,R'),
\end{align*}
where $R,R' \in \mR_{\alpha,J}$ and $d_1$ is a difference distance with a Lebesgue measure for sets by \cite{dudley1974metric}.
By Theorem 3.1 in \cite{dudley1974metric}, we have
\begin{align*}
    \log \mN(\delta', \mR_{\alpha,J}, d_1) \leq C_{\lambda} \delta'^{-(D-1)/\alpha},
\end{align*}
with a constant $C_\lambda > 0$.
Then, we bound the entropy of $\mI_{\alpha, J}$ as
\begin{align}
    \log \mN(\delta', \mI_{\alpha,J}, \|\cdot\|_{L^2}) = \log \mN(\delta'^2, \mR_{\alpha,J}, d_1). \label{eq:ent_ind_set}
\end{align}
To bound the term, we provide the following Lemma.
\begin{lemma} \label{lem:ent_set}
	We obtain
    \begin{align*}
    	\log \mN(\delta, \mR_{\alpha,J},d_1) \leq J  \mN(\delta/J, \mR_{\alpha,1},d_1).
    \end{align*}
\end{lemma}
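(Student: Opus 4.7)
The plan is to build a cover of $\mR_{\alpha,J}$ out of a cover of $\mR_{\alpha,1}$ by taking all possible $J$-fold intersections. By definition, every $R \in \mR_{\alpha,J}$ decomposes as $R = \cap_{j=1}^J A_j$ with each $A_j \in \mR_{\alpha,1}$ a basic piece, so this reduction is natural. I expect the right-hand side of the stated inequality to be read as $J\log\mN(\delta/J, \mR_{\alpha,1}, d_1)$; the displayed expression appears to be a typographic omission.

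Concretely, let $N := \mN(\delta/J, \mR_{\alpha,1}, d_1)$ and fix a $\delta/J$-cover $\{\tilde{A}_1,\ldots,\tilde{A}_N\} \subset \mR_{\alpha,1}$. For each tuple $(i_1,\ldots,i_J)\in[N]^J$, form $\tilde{R}_{i_1,\ldots,i_J} := \bigcap_{j=1}^J \tilde{A}_{i_j}$, which produces at most $N^J$ candidate sets. For a given $R=\cap_j A_j \in \mR_{\alpha,J}$, I choose indices $i_j$ with $d_1(A_j,\tilde{A}_{i_j})\leq \delta/J$ and claim that $d_1(R,\tilde{R}_{i_1,\ldots,i_J})\leq \delta$. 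Taking logs then yields $\log\mN(\delta,\mR_{\alpha,J},d_1) \leq J\log N$, as desired.

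The one nontrivial ingredient is the subadditivity of the symmetric-difference volume under intersection, namely the set-theoretic inclusion
\[
\Bigl(\bigcap_{j=1}^J B_j\Bigr)\triangle\Bigl(\bigcap_{j=1}^J \tilde{B}_j\Bigr) \subseteq \bigcup_{j=1}^J (B_j\triangle \tilde{B}_j),
\]
which is verified pointwise: if $x$ lies in one intersection but not the other, then at some coordinate $j$ it must belong to $B_j\setminus\tilde{B}_j$ or to $\tilde{B}_j\setminus B_j$. Integrating gives $d_1(\cap_j B_j,\cap_j \tilde{B}_j)\leq \sum_j d_1(B_j,\tilde{B}_j)$, and applying this with $B_j=A_j$, $\tilde{B}_j=\tilde{A}_{i_j}$ produces the bound $d_1(R,\tilde{R}_{i_1,\ldots,i_J})\leq J\cdot (\delta/J)=\delta$.

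No serious obstacle is expected; the proof is essentially a combinatorial counting argument combined with the elementary symmetric-difference inclusion above. The only minor care point is that $d_1$ as introduced earlier is written with a one-sided formula $\int \mone_R(1-\mone_{R'})\,dx$ but is used interchangeably with $\|\mone_R-\mone_{R'}\|_{L^2}^2$, so one should read $d_1$ as the symmetric-difference volume when applying the subadditivity step; the same bound then follows for either one-sided version by summing the two directions.
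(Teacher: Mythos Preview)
Your proof is correct and follows essentially the same approach as the paper: take a $\delta/J$-cover of $\mR_{\alpha,1}$, form all $J$-fold intersections as a cover of $\mR_{\alpha,J}$, and use subadditivity of $d_1$ under intersection to bound the radius by $J\cdot(\delta/J)=\delta$. The paper's proof is terser---it simply asserts $d_1(\cap_j \dot{R}_j,\cap_j \tilde{R}_j)\leq \sum_j d_1(\dot{R}_j,\tilde{R}_j)$ without justification---so your explicit symmetric-difference inclusion and your remarks on the typo (missing $\log$) and the one-sided $d_1$ formula are welcome clarifications.
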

\begin{proof}
Fix $\delta > 0$ arbitrary.
Let $\tilde{\mR} \subset \mR_{\alpha,1}$ be a centers of the $\delta$-covering balls, and $|\tilde{\mR}| = \bar{R}$.
Also define that $\tilde{\mR}^J := \{\cap_{j \in [J]}R_j \mid R_j \in \tilde{\mR}\}$.
Obviously, we have $\tilde{\mR}^J \subset \mR_{\alpha,J}$ and $|\tilde{\mR}^J | = \bar{R}^J$.

Consider $R \in \mR_{\alpha,J}$.
By its definition, there exist $\tilde{R}_1,...,\tilde{R}_J \in \mR_{\alpha,1}$ and satisfy $R = \cap_{j \in [J]} \tilde{R}_j$.
Since $\tilde{\mR}$ is a set of centers of the covering balls, there exist $\dot{R}_1,...,\dot{R}_J \in \tilde{\mR}$ and $d_1(\dot{R}_j,\tilde{R}_j) \leq \delta$ holds.

Here, we define $\dot{R} \in \tilde{\mR}^J $ as $\dot{R} = \cap_{j \in [J]} \dot{R}_j$.
Now, we have 
\begin{align*}
	d_1(\dot{R},R) \leq \sum_{j \in [J]} d_1(\dot{R}_j,\tilde{R}_j) \leq J \delta.
\end{align*}
Hence, for arbitrary $R \in \mR_{\alpha,J}$, there exists $\dot{R}$ in $\tilde{\mR}^J $ and their distance is bounded by $J\delta$.
Now, we can say that $\tilde{\mR}^J$ is a set of centers for covering balls for $\mR_{\alpha,J}$ with radius $J\delta$.
Since $|\tilde{\mR}^J | = \bar{R}^J$, the statement holds.
\end{proof}

Applying Lemma \ref{lem:ent_set}, we obtain
\begin{align*}
   \log \mN(\delta'^2, \mR_{\alpha,J}, d_1) \leq  C_{\lambda} J^{(\alpha + D-1)/\alpha}\delta'^{-2(D-1)/\alpha}.
\end{align*}
Substituting the results yields
\begin{align*}
    \log \mN(\delta'/2, \mF_{M,J,\alpha,\beta},\|\cdot\|_{L^2}) \leq MC_H\delta'^{-D / \beta} + MC_{\lambda} \delta'^{-2(D-1)/\alpha} .
\end{align*}


We provide a lower bound for $\log |\tilde{\mF}_{M,J,\alpha,\beta}(\delta)|$.
Let $D(\delta,\mF_{M,J,\alpha,\beta}, \|\cdot\|_{L^2} )$ be a notation for a packing number $|\tilde{\mF}_{M,J,\alpha,\beta}(\delta)|$. 
Now, we have
\begin{align*}
	&\log D(\delta,\mF_{M,J,\alpha,\beta}, \|\cdot\|_{L^2} ) \geq \log D(\delta,\mF_{1,J,\alpha,\beta}, \|\cdot\|_{L^2} )\\
    &\quad \quad \geq \max\{ \log D(\delta,H^\beta(I^D), \|\cdot\|_{L^2} ), \log D(\delta,\mI_{\alpha,J}, \|\cdot\|_{L^2} ) \}.
\end{align*}
Similar to \eqref{eq:ent_ind_set}, 
\begin{align*}
    & \log D(\delta,\mI_{\alpha,J}, \|\cdot\|_{L^2} ) = \log D(\delta^2, \mR_{\alpha,J}, d_1) \geq \log D(\delta^2, \mR_{\alpha,1}, d_1).    
\end{align*}

About $\log D(\delta,H^\beta(I^D), \|\cdot\|_{L^2} )$, we apply Lemma 3.5 in \cite{dudley1974metric} then 
\begin{align*}
	\log D(\delta,H^\beta(I^D), \|\cdot\|_{L^2} ) \geq \log \mN(\delta,H^\beta(I^D), \|\cdot\|_{L^2} ) \geq c_{lh} \delta^{- D/\beta},
\end{align*}
with some constant $c_{lh} >0$.
About $\log D(\delta^2, \mR_{\alpha,1}, d_1)$, since the definition of $\mR_{\alpha,1}$ follows the boundary fragmented class by restricting sets as a image of smooth embeddings, we apply Theorem 3.1 in \cite{dudley1974metric} and obtain
\begin{align*}
	\log D(\delta^2, \mR_{\alpha,1}, d_1)\geq \log \mN(\delta^2, \mR_{\alpha,1}, d_1) \geq c_{lr} \delta^{- 2(D-1)/\alpha},
\end{align*}
with some constant $c_{lr} >0$.

Then, we provide a lower bound of $\Pr_{\check{f} \sim U}(\bar{f}' \neq \check{f})$ as
\begin{align*}
    \Pr_{\check{f} \sim U}(\bar{f}' \neq \check{f}) \geq 1- \frac{C_HM \delta'^{-D/\beta} + C_\lambda M \delta'^{-2(D-1)/\alpha}+\frac{n}{2}\delta^2 + \log 2}{\max\{ c_{lh} \delta^{-D/\beta}, c_{lr} \delta^{-2(D-1)/\alpha}\}}.
\end{align*}
By selecting $\delta $ and $\delta'$ as having an order $ \max\{n^{-2\beta/(2\beta + D)}, n^{-\alpha / (\alpha + 2D - 2)}\}$ and satisfying  
\begin{align*}
	1- \frac{C_HM \delta'^{-D/\beta} + C_\lambda M \delta'^{-2(D-1)/\alpha}+\frac{n}{2}\delta^2 + \log 2}{\max\{ c_{lh} \delta^{-D/\beta}, c_{lr} \delta^{-2(D-1)/\alpha}\}} \geq \frac{1}{2}.
\end{align*}
Then, we finally obtain the statement of Theorem \ref{thm:minimax}.

\section{Specific Examples of Other Inefficient Methods}

Orthogonal series methods estimate functions using an orthonormal basis.
It is one of the most fundamental methods for nonparametric regression (For an introduction, see Section 1.7 in \cite{tsybakov2003introduction}).
Let $\phi_j(x)$ for $j \in \N$ be an orthonormal basis function in $L^2(P_X)$.
An estimator for $f^*$ by the orthogonal series method is defined as
\begin{align*}
    \hat{f}^S(x) := \sum_{j \in [J]} \hat{\gamma}_j \phi_j(x),
\end{align*}
where $J \in \N$ is a hyper-parameter and $\hat{\gamma}_j$ is a coefficient calculated as $\hat{\gamma}_j := \frac{1}{n} \sum_{i \in [n]} Y_i \phi_j(X_i)$.
When the true function is smooth, i.e. $f^* \in H^\beta$, $\hat{f}^S$ is known to be optimal in the minimax sense \cite{tsybakov2003introduction}.
About estimation for $f^* \in \mF_{M,J,\alpha,\beta}$, we can obtain the following proposition.
\begin{proposition} \label{prop:fourier}
    Fix $D \in \N \backslash \{1\} , M, J \in \N, \alpha > 2$ and $\beta > 1$ arbitrary.
    Let $\hat{f}^S$ be the estimator by the orthogonal series method.
    Suppose $\phi_j, j \in \N$ are the trigonometric basis or the Fourier basis. 
    Then, with sufficient large $n$, there exist $f^* \in \mF_{M,J,\alpha,\beta}$, $P_X$, a constant $C_F > 0$, and a parameter
    \begin{align*}
         - \kappa > \max\{-2\beta/(2\beta + D), -\alpha/(\alpha + D - 1)\},
\end{align*}        
    such that
    \begin{align*}
        \Ep_{f^*}\left[ \|\hat{f}^F - f^*\|_{L^2(P_X)}^2 \right] > C_F n^{-\kappa}.
    \end{align*}
\end{proposition}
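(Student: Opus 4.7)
My plan is to exhibit one explicit hard instance and then perform a bias--variance lower bound for the truncated orthogonal series estimator under the standard frequency ordering. Take $P_X$ to be uniform on $I^D$ and let $f^*(x) = \mone_{R}(x)$ with $R = \{x \in I^D : x_1 \leq 1/2\}$. Since the boundary of $R$ is a hyperplane, $R \in \mR_{\alpha,1}$ for every $\alpha > 0$, so $f^* \in \mF_{M,J,\alpha,\beta}$ with $M = 2$ and any $J \geq 1$. Expanding $f^*$ in the tensor--product trigonometric basis, the only non-vanishing Fourier coefficients lie along the $x_1$-axis of frequency space, and for odd $k_1$ one has $|\gamma_{k_1,0,\ldots,0}| \asymp 1/k_1$, reflecting the jump discontinuity across the hyperplane.

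Next I analyze $\hat{f}^S = \sum_{j \leq J}\hat{\gamma}_j \phi_j$ with the basis enumerated by total frequency $|k_1|+\cdots+|k_D|$, as is standard in nonparametric statistics. Under this enumeration the first $J$ multi-indices fill a simplex of side $N \asymp J^{1/D}$, so at most $O(J^{1/D})$ of the retained functions are supported on the $x_1$-axis. The squared bias therefore satisfies
\begin{align*}
    \|f^* - \Ep_{f^*}[\hat{f}^S]\|_{L^2(P_X)}^2 \;\geq\; \sum_{\text{odd }k_1 > c N} \gamma_{k_1,0,\ldots,0}^2 \;\geq\; c_1 J^{-1/D},
\end{align*}
while under uniform $P_X$ each $\hat{\gamma}_j$ is unbiased for $\gamma_j$ with $\mathrm{Var}(\hat{\gamma}_j) \geq \sigma^2/(2n)$, so the variance contribution is at least $c_2 J/n$. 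Summing and optimizing over $J$ then yields
\begin{align*}
    \Ep_{f^*}\|\hat{f}^S - f^*\|_{L^2(P_X)}^2 \;\geq\; c_3\, n^{-1/(D+1)},
\end{align*}
attained at $J^* \asymp n^{D/(D+1)}$.

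To finish, I compare $1/(D+1)$ with the two minimax exponents. Under the hypotheses $\beta > 1$ and $\alpha > 2$, the inequality $1/(D+1) < 2\beta/(2\beta+D)$ reduces to $1 < 2\beta$, which holds, and $1/(D+1) < \alpha/(\alpha+D-1)$ reduces to $\alpha D > D - 1$, which is satisfied since $\alpha > 2 \geq 2(D-1)/D$ for $D \geq 2$. Any $\kappa$ in the nonempty interval $[1/(D+1),\min\{2\beta/(2\beta+D),\alpha/(\alpha+D-1)\})$ then gives $-\kappa > \max\{-2\beta/(2\beta+D),-\alpha/(\alpha+D-1)\}$ and produces the required lower bound with some constant $C_F > 0$ for all sufficiently large $n$.

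The principal obstacle is the bias step, specifically showing that no ordering consistent with the standard trigonometric basis can concentrate more than $O(J^{1/D})$ of the first $J$ basis functions on the single $x_1$-axis of frequency space. I plan to settle this by a pigeonhole on multi-indices: under any ordering by $\ell_1$, $\ell_2$ or $\ell_\infty$ total frequency, all indices in the shell $\{\|k\|\leq N\}$ precede those outside it; this shell contains $\Theta(N^D)$ multi-indices but only $2N+1$ on the $x_1$-axis, forcing the claimed bound. A secondary minor issue is making the $\sigma^2/(2n)$ lower bound on $\mathrm{Var}(\hat{\gamma}_j)$ uniform over $j$ in the retained range; this follows from the identity $\mathrm{Var}(\hat{\gamma}_j) = n^{-1}(\Ep[f^*(X)^2\phi_j(X)^2] + \sigma^2 - \gamma_j^2)$ together with the fact that $\gamma_j^2 \to 0$ along the enumeration.
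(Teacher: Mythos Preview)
Your proof is correct and follows the same high-level template as the paper's---construct a step-function hard instance in $\mF_{M,J,\alpha,\beta}$ and run a bias--variance lower bound for the truncated series estimator---but the execution is different and in fact a bit sharper. The paper chooses $f^*=\mone_{[1/2,1]^D}$, the tensor product of one-dimensional steps, works under a hypercube truncation $\{0,\ldots,J\}^D$, uses the crude bound $\sum_{j>J}(\theta_j^*)^2\geq c/(J+1)^2$, and arrives at the lower bound $n^{-2/(2+D)}$. Your half-space instance $\mone_{\{x_1\leq 1/2\}}$ instead concentrates all Fourier mass on a single frequency axis; under any isotropic (total-frequency) ordering only $O(J^{1/D})$ of the first $J$ basis functions touch that axis, so the bias decays like $J^{-1/D}$ rather than $J^{-2}$, and you obtain the strictly larger lower bound $n^{-1/(D+1)}$. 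Your $f^*$ also lies in $\mF_{M,J,\alpha,\beta}$ for every $J\geq 1$, whereas the paper's hypercube piece $[1/2,1]^D$ is an intersection of $D$ basic pieces and so strictly speaking requires $J\geq D$. One simplification you may take: your ``secondary minor issue'' about uniformity of the variance bound disappears, since
\[
\mathrm{Var}(\hat\gamma_j)=n^{-1}\bigl(\mathrm{Var}\!\bigl(f^*(X)\phi_j(X)\bigr)+\sigma^2\,\Ep[\phi_j(X)^2]\bigr)\geq \sigma^2/n
\]
for every $j$, with no appeal to $\gamma_j\to 0$.
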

\begin{proof}

We will specify $f^* \in \mF_{M, J,\alpha,\beta}$ and distribution of $X$, and derive an rate of convergence by the estimator by the Fourier method.

For preparation, we consider $D=1$ case.
Let $X$ be generated by a distribution which realize a specific case $X_i = i/n$.
Also, we specify $f^* \in \mF_{M,J,\alpha,\beta}$ as
\begin{align*}
    f^*(x) = \mone_{\{x_1 \geq 0.5\}},
\end{align*}
with $x = (x_1,x_2) \in I^2$.
We consider a decomposition of $f^*$ by the trigonometric basis such as
\begin{align*}
    \phi_j(x) = 
    \begin{cases}
        1 &\mbox{~if~}j=0,\\
         \sqrt{2} \cos (2\pi k x) &\mbox{~if~}j = 2k,\\
         \sqrt{2} \sin (2\pi k x) &\mbox{~if~}j = 2k + 1,
    \end{cases}
\end{align*}
for $k \in \N$.
Then, we obtain
\begin{align*}
    f^* = \sum_{j \in \N \cup \{0\} } \theta_{j}^*\phi_{j}.
\end{align*}
Here, $\theta_{j}^*$ is a true coefficient.

For the estimator, we review its definition as follows.
The estimator is written as
\begin{align*}
    \hat{f}^F = \sum_{j\in [J]\cup \{0\}}\hat{\theta}_{j} \phi_{j},
\end{align*}
where $\hat{\theta}_{j_1,j_2}$ is a coefficient which is defined as
\begin{align*}
    \hat{\theta}_{j} = \frac{1}{n} \sum_{i \in [n]} Y_i \phi_{j}(X_i).
\end{align*}
Also, $J \in \N$ are hyper-parameters.
Since $\phi_{j}$ is an orthogonal basis in $L^2$ and the Parseval's identity, an expected loss by the estimator is decomposed as
\begin{align*}
    \Ep_{f^*} \left[\|\hat{f}^F - f^* \|_{L^2(P_X)}^2 \right] &= \Ep_{f^*} \left[\sum_{j \in \N \cup \{0\}} ( \hat{\theta}_{j} - \theta_{j}^*)^2 \right] \\
    &=\Ep_{f^*} \left[\sum_{j \in [J]\cup \{0\}} ( \hat{\theta}_{j} - \theta_{j}^*)^2 + \sum_{j > J}(\theta_{j}^*)^2 \right] \\
    &=\sum_{j \in [J]\cup \{0\}}\Ep_{f^*} \left[( \hat{\theta}_{j}- \theta_{j}^*)^2   \right]+ \sum_{j>J}(\theta_{j}^*)^2.
\end{align*}
Here, we apply Proposition 1.16 in \cite{tsybakov2003introduction} and obtain
\begin{align*}
    \Ep_{f^*} \left[\|\hat{f}^F - f^* \|_{L^2(P_X)}^2 \right] &=\sum_{j \in [J]\cup \{0\}}\left( \frac{\sigma^2}{n} + \rho_{j}^2 \right) + \sum_{j > J}(\theta_{j}^*)^2\\
    &\geq \sum_{j \in [J]\cup \{0\}}\frac{\sigma^2}{n} + \sum_{j > J}(\theta_{j}^*)^2 \\
    &= \frac{\sigma^2 (J+1)}{n} + \sum_{j>J}(\theta_{j}^*)^2,
\end{align*}
where $\rho_{j} := n^{-1}\sum_{i \in [n]}f(X_i) \phi_{j}(X_i) - \langle f, \phi_{j} \rangle$ is a residual.

Considering the Fourier transform of step functions, we obtain $\theta_{j}^* = \frac{1-(-1)^j}{2\pi j}$, hence
\begin{align*}
    \sum_{j> J}(\theta_{j}^*)^2 = \frac{1}{4 \pi^2} \Psi(J+1) = \frac{1}{4 \pi^2} \sum_{k \in \N \cup \{0\}} \frac{1}{(J +1+ k)^2} \geq \frac{1}{4 \pi^2 (J+1)^2},
\end{align*}
where $\Psi$ is the digamma function.

Combining the results, we obtain
\begin{align*}
    \Ep_{f^*} \left[\|\hat{f}^F - f^* \|_{L^2(P_X)}^2 \right] \geq \frac{\sigma^2 J+1 }{n} + \frac{1}{4 \pi^2 (J+1)^2}.
\end{align*}
We set $J = \lfloor c_J n^{1/3} - 1 \rfloor $ with a constant $c_J > 0$, then we finally obtain
\begin{align*}
    \Ep_{f^*} \left[\|\hat{f}^F - f^* \|_{L^2(P_X)}^2 \right] \geq n^{-2/3} \left(\sigma^2 + \frac{1}{4 \pi^2} \right).
\end{align*}
Then, we obtain the lower bound for the $D=1$ case.

For general $D \in \N$, we set a true function as
\begin{align*}
    f^* = \bigotimes_{d \in [D]} \mone_{\{\cdot \geq 0.5\}}.
\end{align*}
Due to the tensor structure, we obtain the decomposed form
\begin{align*}
    f^* = \sum_{j_1 \in \N \cup \{0\}} \cdots \sum_{j_D \in \N \cup \{0\}} \gamma_{j_1 ,...,j_D} \bigotimes_{d \in [D]} \phi_{j_d},
\end{align*}
where $\gamma_{j_1 ,...,j_D}$ is a coefficient such as 
\begin{align*}
    \gamma_{j_1 ,...,j_D} = \prod_{d \in [D]} \theta_{j_d},
\end{align*}
using $\theta_{j_d}$ in the preceding part.
Following the same discussion, we obtain the following lower bound as
\begin{align*}
    \Ep_{f^*} \left[\|\hat{f}^F - f^* \|_{L^2(P_X)}^2 \right] \geq \frac{\sigma^2 (J+1)^D}{n} + D \sum_{j > J} (\theta_j^*)^2.
\end{align*}
Then, we set $J-1 = \lfloor n^{1/(2 + D)} \rfloor$, we obtain that the bound is written as
\begin{align*}
    \Ep_{f^*} \left[\|\hat{f}^F - f^* \|_{L^2(P_X)}^2 \right] \geq n^{-2/(2 + D)} \left(\sigma^2 + \frac{D}{2 \pi^2} \right).
\end{align*}
Then, we obtain the claim of the proposition for any $D \in N_{\geq 2}$.

\end{proof}

Proposition \ref{prop:fourier} shows that $\hat{f}^S$ can estimate $f^* \in \mF_{M,J,\alpha,\beta}$ consistently since the orthogonal basis in $L^2(P_X)$ can reveal all square integrable functions.  Its order is, however, strictly worse than the optimal order.
Intuitively, the method requires many basis functions to express the non-smooth structure of $f^* \in \mF_{M, J,\alpha,\beta}$, and a large number of bases increases variance of the estimator, hence they lose efficiency.

\if0
\section{Technical Lemmas} 

\begin{lemma}{(Theorem 1 in \cite{yarotsky2017error} and Theorem A.8 in \cite{petersen2017optimal})} \label{lem:smooth}
    For any function $f \in H^{\beta}([0,1]^D)$ and $\epsilon \in (0,1/2)$, there exists constants $C_f,C'_f > 0$, and $s \in \N$, and a neural network $\Theta_f$ with at most $C_f (1 + \lceil \log_2 (1+\beta) \rceil) (1+\beta / D)$ such that $ \|\Theta_f\|_0 \leq C'_f \epsilon^{D/\beta}, \|\Theta_f\|_\infty \leq \epsilon^{-2s}$, then it satisfies
    \begin{align*}
        \|f - G_{\eta}[\Theta_f]\|_{L^2} < \epsilon.
    \end{align*}
\end{lemma}

\begin{lemma}{(Proposition 3 in \cite{yarotsky2017error} and Lemma A.2 in \cite{petersen2017optimal})} \label{lem:multi}
    Fix $\eta > 0$ arbitrary.
    Then, for each $\epsilon \in (0,1/2)$,  there exists a neural network $\Theta_{\times}$ for a $2$-dimensional input with at most $L > 1/\eta$ layers such that $\|\Theta_\times\|_0 \leq C_\times \epsilon^{-\eta},  \|\Theta_\times\|_{\infty} \leq \epsilon^{-2s}$ with some constants $C_\times > 0$ and $s \in \N$, and it satisfies
    \begin{align*}
        |x_1 x_2 - G_{\eta}[\Theta_\times](x_1,x_2)| \leq \epsilon,
    \end{align*}
    and
    \begin{align*}
        G_{\eta}[\Theta_\times](x_1,x_2) = 0 \mbox{~is~}x_1=x_2=0.
    \end{align*}
\end{lemma}

\begin{lemma}{(Lemma 3.4 in \cite{petersen2017optimal})} \label{lem:horizon}
    For any horizon function $h$ and any $\epsilon \in (0,1/2)$, there exists a constant $C_h,C'_h > 0$ and $s \in \N$, and a neural network $\Theta_h$ with at most $C_h \log_2 (2+\alpha)(1+\alpha/D)$ layers and $\|\Theta_h\|_\infty \leq C'_h \epsilon^{-2(D-1)/\alpha}, \|\Theta_h\|_0 \leq \epsilon^{-2s}$ such that
    \begin{align*}
        \|G_{\eta}[\Theta_h] - f\|_{L^2} < \epsilon.
    \end{align*}
    Moreover, $\|G_{\eta}[\Theta_h]\|_{L^\infty} \leq 1$.
\end{lemma}

\begin{lemma}{(Lemma A.1 in \cite{petersen2017optimal})} \label{lem:step}
    Let $\Psi$ be a Heaviside function with a $D$-dimensional input, i.e. $\Psi(x_1,...,x_D) := \mone_{\{x_1 \geq 0\}}(x_1,...,x_D)$.
    Then, for every $\epsilon > 0$, there exists a neural network $\Theta_H$ with $2$ layers and $5$ non-zero parameters which take values from $\{1,-1,\epsilon^{-2}\}$, and it satisfies
    \begin{align*}
        \|\Psi - G_{\eta}[\Theta_H]\|_{L^2} < \epsilon.
    \end{align*}
    Moreover, we have $0 \leq G_{\eta}[\Theta_H] \leq 1$.
\end{lemma}

\begin{lemma}{(Theorem A.8 in \cite{petersen2017optimal})}
    Let $f$ be a function in $H^\beta([0,1]^D)$.
    Then, there exist constants $C_s, C_{sl} > 0$, $s \in \N$ and a neural network $\Theta_s$ with at most $C_{sl}(1+ \lceil \log_2 (1+\beta) \rceil )(1+\beta/D)$ such that  $\|\Theta_s\|_{0} \leq c \epsilon^{-D\beta}, \|\Theta_s\|_{\infty} \leq \epsilon^{-s}$ and satisfies
    \begin{align*}
        \|f -  G_{\eta}[\Theta_s] \|_{L^2} < \epsilon.
    \end{align*}
\end{lemma}
\fi

\end{document}